\documentclass[runningheads,envcountsame]{llncs}

\usepackage{amsmath,amssymb}
\usepackage{mathtools}
\usepackage{graphicx}
\usepackage{booktabs}
\usepackage{enumitem}
\usepackage{tikz}
\usepackage{pgfplots}
\usepackage{mdframed}
\usepackage[expansion=false]{microtype}
\usepackage[hidelinks]{hyperref}
\pgfplotsset{compat=1.17}

\spnewtheorem{assumption}[theorem]{Assumption}{\bfseries}{\itshape}

\newcommand{\cB}{\mathcal{B}}
\newcommand{\cX}{\mathcal{X}}
\newcommand{\cS}{\mathcal{S}}
\newcommand{\cE}{\mathcal{E}}
\newcommand{\R}{\mathbb{R}}
\newcommand{\E}{\mathbb{E}}
\newcommand{\ind}{\mathbf{1}}
\newcommand{\eps}{\varepsilon}
\newcommand{\dvc}{d_{\mathrm{VC}}}
\newcommand{\dis}{\theta}

\begin{document}

\title{Information Design Against Gaming and Learning Adversaries}
\titlerunning{Information Design Against Gaming and Learning Adversaries}

\author{Madhava Gaikwad}
\authorrunning{M. Gaikwad}
\institute{Independent Researcher, Bengaluru, India\\
\email{gaikwad.madhav@gmail.com}}

\maketitle

\begin{abstract}
A principal who deploys a binary classifier with an abstention option must decide which queries the mechanism abstains on. The right choice depends on the adversary. A gaming adversary already knows the classifier and tries to manipulate features across the boundary, so the principal does best by abstaining on queries close to that boundary. The same boundary-localizing rule is the worst possible choice against a learning adversary who does not know the classifier: each abstention now tells the adversary that the boundary is nearby, which is enough to drive a binary search. We analyze this tension. The two natural defenses, abstaining at a fixed rate and abstaining near the boundary, are Blackwell-incomparable: neither can be simulated by post-processing the other's responses. The number of queries needed to reconstruct the boundary to error $\eps$ is $\tilde\Theta(d/\eps)$ under the first defense and $\Theta(d \log(1/\eps))$ under the second, where $d$ is the VC dimension of the classifier family and $\tilde\Theta$ suppresses factors polylogarithmic in $d$ and $1/\eps$. The first rate is a worst case over query distributions; no reconstruction algorithm can close the gap at the distributions that attain it. We characterize the Pareto frontier between the two defense objectives, and confirm both rates on seven binary-classification tasks spanning tabular, image, and language-model-feature inputs: label-plus-counterfactual access extracts the boundary with up to $200\times$ fewer queries than a published label-only baseline.

\keywords{Information design \and Abstention \and Model extraction \and Strategic classification \and Query complexity \and Security games.}
\end{abstract}

\section{Introduction}
\label{sec:intro}
A binary classifier deployed behind an API leaks information through its answers. Every answered query gives the queryer one observation about the location of the classifier's \emph{decision boundary}, the surface in feature space that separates the two predicted classes, and an adversary who aggregates many such observations can reconstruct the boundary. This reconstruction problem is known as \emph{model extraction}~\cite{tramer2016stealing}, and it now occurs at industrial scale against deployed systems: in February 2026 Anthropic attributed roughly sixteen million query exchanges against Claude, across some twenty-four thousand fraudulent accounts, to three competing laboratories distilling its capabilities, and responded with behavioral fingerprinting of its API traffic~\cite{anthropic2026distillation}. We call the operator of the classifier the \emph{principal}, following the information-design literature, and we call the queryer the \emph{adversary} or \emph{receiver}. The principal commits in advance to a classifier $B$ on some feature space $\cX$ and to a rule for how the mechanism responds to each query. The response is one of three signals, accept, reject, or abstain. The principal cannot change the rule once an adversary starts querying. A bank's fraud filter, a platform's account-creation gate, and an LLM provider's content classifier are all instances.
The accept and reject signals are pinned down by truthfulness. The abstain signal is the principal's only design lever, and the principal commits to an abstention budget bounding the fraction of queries that may be abstained on. The design problem is choosing \emph{which} queries to abstain on within the budget.

Two adversary types matter in practice. A \emph{gaming} adversary knows $B$ and wants to find a query close to its boundary $\partial B$ that the mechanism still accepts~\cite{hardt2016strategic}. A \emph{learning} adversary does not know $B$ and queries the mechanism in order to reconstruct it, i.e., it mounts a model-extraction attack~\cite{tramer2016stealing}. The principal commits to a single rule that faces both. Our central question is the following.

\begin{mdframed}
\textbf{Question.} \emph{What is the principal-optimal abstention rule against an adversary whose payoff is decreasing in boundary-reconstruction error, subject to an abstention budget? How does it compare to the optimum against a gaming adversary?}
\end{mdframed}

Such mechanisms increasingly run as classification heads on top of frozen language-model encoders, so the boundary $\partial B$ is a hyperplane in embedding space and the principal's instrument is the response rule rather than the classifier itself. The principal then chooses what to disclose alongside the label, ranging from nothing to a \emph{counterfactual}, the nearest input on the opposite side of the boundary, which pins down the boundary's location near the queried point. Section~\ref{sec:experiments} returns to this setting on four such tasks built on a frozen MiniLM-L6 encoder.

\paragraph{A worked example.}
Consider a fraud filter that scores transactions on a single risk feature $x \in [0,1]$ and accepts iff $x < b$ for some unknown threshold $b$. The filter is allowed to abstain (output ``review needed'') on a $\bar\alpha = 0.3$ fraction of queries. The principal commits to a rule $\cE$ that decides, given $x$ and $b$, whether to abstain.

Two natural rules are available.

\emph{Rule A (coarse).} Flip an independent biased coin. Abstain with probability $0.3$ on every query, regardless of $x$ or $B$. When not abstaining, return the truthful label.

\emph{Rule B (boundary-localizing).} Pick $\delta$ so that, under the query distribution $\mu$, a $0.3$ fraction of queries land within distance $\delta$ of $b$. Abstain on exactly those queries. Return the truthful label on the rest.

Both rules have the same average abstention rate $\bar\alpha = 0.3$, but they differ sharply once an adversary watches \emph{which specific queries} get abstained.
Against a gaming adversary who knows $b$ and wants a confirmed accept near $b$, Rule B is the stronger defense. Every query within $\delta$ of $b$ is abstained on, so the adversary cannot get a confirmation of acceptance close to the threshold. Rule A lets $70\%$ of near-boundary queries through.

Against a learning adversary who does not know $b$ and wants to estimate it, Rule B backfires. An abstain response on query $x$ tells the adversary that $b$ lies within $\delta$ of $x$. That is enough to drive a binary search and locate $b$ to within any target precision $\eps$ in $O(\log(1/\eps))$ queries. Under Rule A an adversary who samples its queries from $\mu$ needs on the order of $1/\eps$ of them. One dimension flatters Rule A here: each non-abstained label already reveals on which side of $b$ the query fell, so an adaptive adversary can binary-search on the labels alone and match the $O(\log(1/\eps))$ rate under either rule. The gap that survives adaptivity opens on classes and distributions where labels do not localize the boundary but an abstain that witnesses proximity still does; Theorems~\ref{thm:learner-value-coarse} and~\ref{thm:learner-value-loc} quantify exactly this.
The two rules release different kinds of information. Rule B's abstention encodes a witness of distance to the boundary, where Rule A's, drawn independently of $b$, encodes nothing about it. The rest of the paper makes this precise and shows for which hypothesis classes and query distributions the separation persists.

\paragraph{Main results.}
Let $\cB$ be a hypothesis class of binary classifiers with VC dimension $d$, and let $\mu$ be a fixed distribution on $\cX$: the benchmark for both extraction error and the abstention budget, not a restriction on the adversary's queries (Sect.~\ref{sec:model}). An experiment $\cE$ maps $(x, B) \in \cX \times \cB$ to a signal in $\{\text{accept}, \text{reject}, \text{abstain}\}$ (or a richer signal space, see Sect.~\ref{sec:model}). The principal's \emph{abstention budget} is a cap $\bar\alpha \in [0,1)$ on the probability of an abstain signal under $\mu$; its complement $1 - \bar\alpha$ is the \emph{coverage}. Let $\eps \in (0, 1)$ denote the target precision. We say a receiver \emph{$\eps$-learns} the boundary if its output $\hat B$ satisfies $\Pr_{x \sim \mu}[\hat B(x) \neq B(x)] \leq \eps$ with probability at least $2/3$. Writing $L$ for the number of queries the receiver makes, let $L^*_L(\cE; \eps)$ be the query complexity of $\eps$-learning under $\cE$, and let $L^*_G(\cE)$ be the expected number of queries a gaming adversary needs to confirm a boundary crossing. Throughout, $\tilde\Theta$, $\tilde O$, and $\tilde\Omega$ suppress factors polylogarithmic in $d$ and $1/\eps$; where a $\log(1/\eps)$ appears explicitly it is the leading such factor, and the tilde absorbs the rest. Throughout, $\cE_{\mathrm{coarse}}$ and $\cE_{\mathrm{loc}}^\delta$ denote the multi-dimensional generalizations of Rule~A and Rule~B from the worked example, and $\cE_\lambda = \lambda \cE_{\mathrm{coarse}} + (1-\lambda) \cE_{\mathrm{loc}}^\delta$ their convex combinations, meaning that each query's signal is drawn from $\cE_{\mathrm{coarse}}$ with probability $\lambda$ and from $\cE_{\mathrm{loc}}^\delta$ otherwise, independently across queries.

\begin{enumerate}[leftmargin=*]
\item \textbf{Tight rates at both extremes (Theorems~\ref{thm:learner-value-coarse} and~\ref{thm:learner-value-loc}).}
Write $\dis_{\cB, \mu}(\eps) \in [1, 1/\eps]$ for the active-learning \emph{disagreement coefficient} of $\cB$ under $\mu$~\cite{hanneke2014theory}, defined in Sect.~\ref{sec:model}, which measures the rate at which labels alone shrink the set of plausible boundaries. Then
\[
L^*_L(\cE_{\mathrm{coarse}}; \eps) = \tilde\Theta\!\left( \frac{d \cdot \dis_{\cB, \mu}(\eps) \log(1/\eps)}{1 - \bar\alpha} \right),
\quad
L^*_L(\cE_{\mathrm{loc}}^\delta; \eps) = \Theta\!\left( d \log(1/\eps) \right)\!.
\]
At its worst-case value $\dis_{\cB, \mu}(\eps) = \Theta(1/\eps)$ the coarse rate exceeds the localizing rate by a factor of $1/\eps$; at favorable pairs, such as linear classifiers under log-concave $\mu$, the two rates match up to constants (Theorem~\ref{thm:learner-value-coarse}). The localizing rate requires a mild geometric regularity condition on $\cB$ (Assumption~\ref{ass:regularity}), under which coordinate-wise binary search, driven by the abstention signal, achieves it.

\item \textbf{The two extremes are Blackwell-incomparable (Theorem~\ref{thm:dichotomy}).} Neither experiment can be simulated from the other by post-processing of signals, so no reconstruction algorithm can close the rate gap between them at unfavorable distributions.

\item \textbf{The two extremes defend against different adversaries (Theorem~\ref{thm:divergence}).} $\cE_{\mathrm{loc}}^\delta$ blocks accept signals near the boundary, exactly the signals a gaming adversary needs; $\cE_{\mathrm{coarse}}$ lets them through. But the learning adversary's picture is reversed: at unfavorable distributions, extraction under $\cE_{\mathrm{coarse}}$ costs a factor of $1/\eps$ more queries. Any rule in the family $\cE_\lambda$ between them trades one weakness for the other.

\item \textbf{Pareto frontier (Theorem~\ref{thm:pareto}).} For convex combinations of the two extremes, we characterize the achievable $(L^*_G, L^*_L)$ frontier and bound the welfare cost of robustness to both adversaries.

\end{enumerate}

\section{Related Work}
\label{sec:related}

We study a receiver who draws inferences from responses that the principal controls. This places us at the intersection of three literatures: strategic classification, model extraction, and information design.

\paragraph{Strategic classification and abstention as a defense.}
Beginning with Hardt et al.~\cite{hardt2016strategic}, the strategic-classification literature studies how agents game a known mechanism~\cite{dong2018strategic,chen2020learning,kleinberg2020classifiers}. Most of it assumes the classifier is fixed, and analyzes equilibrium agent behavior; our principal explicitly designs the information environment. Closest to us, Alkarmi et al.~\cite{alkarmi2025abstention} introduce abstention into strategic classification and characterize the optimal abstention rule through a Stackelberg lens, against an adversary who already knows the classifier. We study the complementary problem of an adversary who does not know it. The two papers share the topic of abstention as a defense but are technically orthogonal: their rates govern post-manipulation loss, while ours govern query complexity. Cohen et al.~\cite{cohen2024bayesian} also release truthful but partial information about a deployed classifier against a gaming adversary with a Bayesian prior; our receiver is instead a learning adversary. Reject-option classification~\cite{chow1970optimum,bartlett2008reject,cortes2016rejection} studies abstention from the accuracy perspective rather than the adversary's.

\paragraph{Model extraction and decision-based attacks.}
Model-extraction attacks reconstruct a classifier from queries~\cite{lowd2005adversarial,tramer2016stealing,jagielski2020extraction}, typically without principal commitment. Decision-based attacks~\cite{brendel2018decision,chen2020hopskipjump} reconstruct boundaries from hard-label feedback by binary search along the surface. Our boundary-localizing experiment provides ternary feedback that is strictly more informative than binary, which is what enables the exponential improvement in Theorem~\ref{thm:learner-value-loc}. Connections between extraction and active learning were drawn empirically by~\cite{chandrasekaran2020exploring,pal2020activethief}. In the counterfactual-explanation literature, several works~\cite{aivodji2020counterfactual,dissanayake2024polytope,wang2022dualcf} observe empirically that boundary-proximal explanations accelerate model extraction; Theorem~\ref{thm:learner-value-loc} gives a tight information-theoretic explanation, since such explanations are a boundary-localizing experiment and so fall in the $\Theta(d \log(1/\eps))$ regime. {\sloppy Counterfactuals are one instance of a broader pattern of \emph{explanation leakage}: gradient explanations reconstruct two-layer networks with dimension-independent query counts~\cite{milli2019model}, explanations leak training-set membership~\cite{shokri2021privacy}, and confidence scores admit equation-solving extraction~\cite{tramer2016stealing}.\par}

\paragraph{Active learning and selective classification.}
The disagreement-coefficient framework~\cite{balcan2006agnostic,hanneke2014theory,wang2011smoothness,zhang2014beyond} governs label complexity in active learning, and we use it directly in Theorem~\ref{thm:learner-value-coarse}. Zhu and Nowak~\cite{zhu2022efficient} achieve $\mathrm{polylog}(1/\eps)$ label complexity via learner-chosen abstention; Theorem~\ref{thm:learner-value-loc} achieves the analogous rate via principal-chosen abstention. The connection between active learning and selective classification was anticipated by El-Yaniv and Wiener~\cite{elyaniv2010selective}, and we formalize it in the adversarial setting. On the deployment side, selective classifiers are typically realized as confidence thresholds or trained rejection heads~\cite{geifman2019selectivenet}; our abstain signal is an idealization of such heads, with the threshold set by a budget rather than a target risk. Such heads are themselves attack surfaces: a rejection mechanism robust to adversarially chosen inputs yields a robust classifier at half the radius~\cite{tramer2022detecting}, so rejection is no easier than robust classification; our Theorem~\ref{thm:divergence} concerns instead where the abstain region is placed.

\paragraph{Information design.}
{\sloppy Our framing borrows from Bayesian persuasion~\cite{kamenica2011bayesian} and information design more broadly~\cite{bergemann2019information}. The technical novelty over that literature is the receiver's inferential utility, in place of the action-payoff utilities standard in persuasion, together with the connection to active-learning rates. Cummings et al.~\cite{cummings2015privacy} design mechanisms that elicit data from privacy-aware strategic agents; the differential-privacy literature can be viewed as information design against a worst-case inferential receiver, while we exploit the structure of $\cB$.\par}

\section{Model}
\label{sec:model}

\paragraph{Boundaries and queries.}
Let $\cB$ be a class of binary classifiers $B : \cX \to \{0,1\}$ with $\cX \subseteq \R^p$ for some ambient dimension $p \geq 1$. Write $\dvc(\cB) = d$ for the VC dimension of $\cB$~\cite{vapnik1971uniform}. Our bounds depend on $d$ and not on $p$. Write $\partial B$ for the topological boundary of $B$, $\mathrm{dist}(x, \partial B)$ for the Euclidean distance to it, and $\ind\{\cdot\}$ for the indicator function. Let $\mu$ be a distribution on $\cX$, and $\nu$ a prior on $\cB$. Classifiers are binary throughout; Sect.~\ref{sec:disc} discusses the multi-class extension.

\paragraph{The principal's instrument.}
The principal commits to an \emph{experiment}, a Markov kernel mapping each state, i.e.\ each query-boundary pair $(x, B)$, to a distribution over signals,
\[
\cE : \cX \times \cB \to \Delta(\cS),
\qquad \cS = \{\text{accept}, \text{reject}, \text{abstain}\},
\]
where $\Delta(\cS)$ denotes the set of probability distributions on $\cS$.
We require $\cE$ to be \emph{label-truthful}. The signal ``accept'' has zero conditional probability when $B(x) = 0$, and the signal ``reject'' has zero conditional probability when $B(x) = 1$. The signal ``abstain'' is unconstrained, and is the principal's design lever. The principal operates under an \emph{abstention budget} $\bar\alpha \in [0, 1)$, a cap on the marginal probability of the abstain signal,
\[
\E_{B \sim \nu, x \sim \mu}\bigl[\ind\{\cE(x,B) = \text{abstain}\}\bigr] \;\leq\; \bar\alpha,
\]
and the complementary guarantee $1 - \bar\alpha$ is the \emph{coverage}, the minimum rate of answered queries on traffic drawn from $\mu$. Throughout, $\bar\alpha$ denotes only this budget; both extremal experiments below meet it with equality. A $1/(1-\bar\alpha)$ factor, where one appears, is the price of coverage; the localizing rate carries none (Remark~\ref{rem:delta}). The prior $\nu$ enters only through this budget accounting and through the threshold calibration of Example~\ref{ex:two-experiments}; the learning and gaming guarantees of Sects.~\ref{sec:characterization} and~\ref{sec:divergence} hold for a fixed worst-case $B \in \cB$, so the setting is minimax rather than Bayesian. A principal who deploys a single known classifier may take $\nu$ to be the point mass at it.

\begin{example}[Two extremal experiments]
\label{ex:two-experiments}
We focus on two specific abstention rules.
\begin{itemize}[nosep]
\item \emph{Coarse} $\cE_{\mathrm{coarse}}$. Abstain with probability $\bar\alpha$ independently of $(x, B)$. Otherwise return the truthful label.
\item \emph{Boundary-localizing} $\cE_{\mathrm{loc}}^\delta$. Abstain iff $\mathrm{dist}(x, \partial B) < \delta$, with $\delta$ chosen so that $\Pr_{B \sim \nu, x \sim \mu}[\mathrm{dist}(x,\partial B) < \delta] = \bar\alpha$.
\end{itemize}
Their convex combinations $\cE_\lambda = \lambda \cE_{\mathrm{coarse}} + (1-\lambda) \cE_{\mathrm{loc}}^\delta$ form a one-parameter family with average coverage $1 - \bar\alpha$: on each query, independently of the past, the signal is drawn from $\cE_{\mathrm{coarse}}$ with probability $\lambda$ and from $\cE_{\mathrm{loc}}^\delta$ otherwise.
\end{example}

The boundary-localizing rule of Example~\ref{ex:two-experiments} is one member of a broader class of experiments whose abstain signal acts as a proximity witness. The general form, which our experiments in Sect.~\ref{sec:experiments} also instantiate, is the following.

\begin{definition}[Boundary-localizing experiment, general form]
\label{def:localizing}
An experiment $\cE$ is \emph{boundary-localizing at scale $\delta$} if there is a signal $s^* \in \cS$ that acts as a witness of proximity to the boundary: for some constant $C_1 \in (0, 1]$ and every $B \in \cB$, $x \in \cX$,
\[
\mathrm{dist}(x, \partial B) < \delta \;\Longrightarrow\; \Pr_\cE[\cE(x, B) = s^* \mid x] \;\geq\; C_1,
\]
and the map $x \mapsto \Pr_\cE[\cE(x, B) = s^* \mid x]$ is non-increasing in $\mathrm{dist}(x, \partial B)$.
Because the probability of $s^*$ climbs as the query approaches the boundary, the receiver can use $s^*$ as a distance-to-boundary estimator. Coarse abstention is not boundary-localizing at any scale; $\cE_{\mathrm{loc}}^\delta$ is boundary-localizing at scale $\delta$ with witness signal $s^* = \text{abstain}$.
\end{definition}

\paragraph{Choosing $\delta$ in practice.} The principal knows $B$ but typically lacks a closed form for $\mu$. To meet a budget $\bar\alpha$, draw an unlabeled sample from $\mu$ and set $\delta$ to the $\bar\alpha$-quantile of the empirical distance-to-boundary distribution; a sample of size $\Theta(1/\bar\alpha)$ suffices for the quantile to concentrate, by Glivenko-Cantelli~\cite{vapnik1971uniform}.

\paragraph{Deployment interpretation of abstention.} In our framework the abstain signal is a single bit: the mechanism returns no label. Real systems implement this bit in many ways: a manual-review queue, a request for additional documentation, a retry signal with back-off, or deferral pending human approval. A learning adversary observes the same thing in all four cases --- no label --- so our analysis treats them as equivalent, even though legitimate users experience them differently. The rate $\bar\alpha$ budgets this single bit; our bounds hold for any deployment respecting it.

\paragraph{Receivers.}
A receiver interacts with $\cE$ over $L$ rounds. At each round it chooses a query adaptively from the history of past (query, signal) pairs, and after $L$ rounds it outputs a function of the entire transcript. The receiver knows $\cE$ when choosing its protocol, as is standard in information design. Adaptivity is also the standard threat model in active learning~\cite{hanneke2014theory} and in decision-based attacks~\cite{brendel2018decision,chen2020hopskipjump}, and our lower bounds apply to non-adaptive receivers as well. We consider two receiver types.
\begin{itemize}[nosep]
\item \emph{Learning receiver.} Outputs $\hat B \in \cB$. Its utility is $u_L(\hat B, B) = -\Pr_{x \sim \mu}[\hat B(x) \neq B(x)]$. We say it $\eps$-learns if $u_L \geq -\eps$ with probability at least $2/3$.
\item \emph{Gaming receiver.} Knows $B$ and a manipulation budget $\delta_G > 0$. Seeks any query $x$ with $\mathrm{dist}(x, \partial B) < \delta_G$ and $\cE(x, B) = \text{accept}$. Its utility is the indicator of at least one such accept signal in $L$ queries.
\end{itemize}
The \emph{value} $V_\tau(\cE; L)$ for receiver type $\tau \in \{L, G\}$ is the supremum over receiver protocols of expected utility under $L$ queries. The learning query complexity is $L^*_L(\cE; \eps) = \min\{L : V_L(\cE; L) \geq -\eps\}$. For the gaming receiver we measure the expected number of queries until its first accept signal, $L^*_G(\cE) = \E[\inf\{L : \text{an accept is obtained within the first } L \text{ queries}\}]$. When each query in the manipulation region yields an accept independently with probability $q$, the stopping time is geometric and $L^*_G(\cE) = 1/q$; the value threshold $\min\{L : V_G(\cE; L) \geq 1/2\}$ agrees with this up to a $\Theta(1)$ factor.

\paragraph{What $\mu$ does and does not constrain.}
The distribution $\mu$ enters the model in exactly two places: the learning receiver's error and the abstention budget are both measured under it. It places no restriction on queries: receivers query adaptively anywhere in $\cX$, including regions of vanishing $\mu$-mass, so the abstain frequency they experience may far exceed $\bar\alpha$ --- the binary search behind Theorem~\ref{thm:learner-value-loc} concentrates queries near $\partial B$ without violating a budget that binds only on benign traffic. Our bounds hold for each fixed $\mu$; the $1/\eps$ separation is a worst case over query distributions (Theorem~\ref{thm:learner-value-coarse}).

\paragraph{The disagreement coefficient.}
Our rate at coarse abstention is stated in terms of the active-learning disagreement coefficient~\cite{hanneke2014theory}, written $\dis_{\cB, \mu}(\eps)$ and taking values in $[1, 1/\eps]$. It measures the rate at which the set of plausible boundaries shrinks as the learner acquires labels. Formally,
\[
\dis_{\cB, \mu}(\eps) \;=\; \sup_{B \in \cB} \,\sup_{r \geq \eps}\, \frac{\mu(\mathrm{DIS}_{\cB, \mu}(B, r))}{r},
\]
\[
\mathrm{DIS}_{\cB, \mu}(B, r) \;=\; \bigcup_{B' :\, \Pr_{x \sim \mu}[B(x) \neq B'(x)] \leq r} \{x : B(x) \neq B'(x)\},
\]
where $\mathrm{DIS}_{\cB, \mu}(B, r)$ is the set of points where some boundary within $\mu$-distance $r$ of $B$ disagrees with $B$. The quantity $\dis_{\cB, \mu}(\eps)$ governs the rate at the coarse extreme. There the response rule acts on labels as a random erasure, and the geometry of $\cB$ enters only through how fast the set of boundaries consistent with the observed labels shrinks. At the boundary-localizing extreme the response rule provides side information about distance to $\partial B$, and the relevant complexity is geometric, captured by Assumption~\ref{ass:regularity} below.

\section{Characterization}
\label{sec:characterization}

We first establish the Blackwell-incomparability of the two extremal experiments (Theorem~\ref{thm:dichotomy}), and then state the tight query complexity at each (Theorems~\ref{thm:learner-value-coarse} and~\ref{thm:learner-value-loc}). The two regimes call for different algorithms. Under $\cE_{\mathrm{coarse}}$ the receiver runs active learning under random label erasure. Under $\cE_{\mathrm{loc}}^\delta$ the receiver runs coordinate-wise binary search using the abstention signal as a localization witness.

\subsection{The Two Extremes Are Blackwell-Incomparable}
\label{sec:dichotomy}

\begin{theorem}[Dichotomy of the extremal experiments]
\label{thm:dichotomy}
For any abstention budget $\bar\alpha \in (0, 1/2)$, the two experiments $\cE_{\mathrm{coarse}}$ and $\cE_{\mathrm{loc}}^\delta$ are \emph{Blackwell-incomparable}~\cite{blackwell1953equivalent,lecam1986asymptotic,torgersen1991comparison}. There is no Markov kernel $K_1$ acting on signals (a \emph{garbling}) with $K_1 \circ \cE_{\mathrm{coarse}} = \cE_{\mathrm{loc}}^\delta$ in distribution, and no garbling $K_2$ with $K_2 \circ \cE_{\mathrm{loc}}^\delta = \cE_{\mathrm{coarse}}$.
\end{theorem}

The restriction to $\bar\alpha < 1/2$ reflects deployment, where abstention budgets are small fractions of traffic; at $\bar\alpha = 0$ both experiments coincide with the truthful experiment, so some abstention is needed for the two to differ.
\begin{proof}
We exhibit, in each direction, two specific boundaries $B^{(1)}, B^{(2)} \in \cB$ and a query distribution $\mu$ such that no signal-only Markov kernel $K : \cS \to \Delta(\cS)$ can map one experiment to the other.

\emph{Setup.} Take $\cX = [0,1]$, $\mu$ uniform on $[0,1]$, and let $\cB = \{B_b : b \in [0,1]\}$ be the class of one-dimensional thresholds $B_b(x) = \ind\{x \geq b\}$. This is the simplest class satisfying Assumption~\ref{ass:regularity}. Both experiments use signal space $\cS = \{\text{accept}, \text{reject}, \text{abstain}\}$ and average abstention rate $\bar\alpha$.

\emph{Step 1: $\cE_{\mathrm{loc}}^\delta$ does not garble to $\cE_{\mathrm{coarse}}$.}
Choose witnesses $B^{(1)} = B_{0.5}$ and $B^{(2)} = B_{0.5 + 2\delta}$ with $\delta = \bar\alpha/2$, so that $2\delta < 1/2$, both thresholds lie in $[0,1]$, and abstention has $\mu$-mass $2\delta = \bar\alpha$ under $\cE_{\mathrm{loc}}^\delta$. Under $B^{(1)}$ the abstain region is $x \in (0.5 - \delta, 0.5 + \delta)$; under $B^{(2)}$ it is $x \in (0.5 + \delta, 0.5 + 3\delta)$. These intervals are disjoint.

Suppose for contradiction that $K \circ \cE_{\mathrm{loc}}^\delta = \cE_{\mathrm{coarse}}$ in distribution for a signal-only kernel $K$, so the equality holds for both witnesses simultaneously. Fix a query $x_1 \in (0.5 - \delta, 0.5)$. Under $B^{(1)}$ we have $\cE_{\mathrm{loc}}^\delta(x_1, B^{(1)}) = \text{abstain}$ (since $|x_1 - 0.5| < \delta$), whereas $\cE_{\mathrm{coarse}}(x_1, B^{(1)})$ outputs ``abstain'' with probability $\bar\alpha$ and the truthful label ``reject'' with probability $1 - \bar\alpha$ (the label is ``reject'' because $x_1 < 0.5$). Matching the two distributions at $x_1$ forces $K(\text{abstain}, \text{reject}) = 1 - \bar\alpha$. Now fix a query $x_2 \in (0.5 + 2\delta, 0.5 + 3\delta)$. Under $B^{(2)}$ we have $\cE_{\mathrm{loc}}^\delta(x_2, B^{(2)}) = \text{abstain}$, whereas $\cE_{\mathrm{coarse}}(x_2, B^{(2)})$ outputs ``abstain'' with probability $\bar\alpha$ and the truthful label ``accept'' with probability $1 - \bar\alpha$ (the label is ``accept'' because $x_2 > b^{(2)} = 0.5 + 2\delta$). Matching at $x_2$ forces $K(\text{abstain}, \text{accept}) = 1 - \bar\alpha$. The single row $K(\text{abstain}, \cdot)$ would then have total mass at least $2(1-\bar\alpha) > 1$ for $\bar\alpha < 1/2$, a contradiction.

\emph{Step 2: $\cE_{\mathrm{coarse}}$ does not garble to $\cE_{\mathrm{loc}}^\delta$.}
Take $B^{(1)} = B_{0.25}$ and $B^{(2)} = B_{0.75}$, with $\delta = \bar\alpha/2 < 1/4$ so the $\delta$-neighborhoods $(0.25 - \delta, 0.25 + \delta)$ and $(0.75 - \delta, 0.75 + \delta)$ are disjoint. Under $\cE_{\mathrm{coarse}}$ the signal at a query $x$ depends on the boundary only through the truthful label $B(x)$: it is ``abstain'' with probability $\bar\alpha$ and equals $B(x)$ otherwise. Consequently, for any signal-only kernel $K$, the output distribution $(K \circ \cE_{\mathrm{coarse}})(x, B)$ also depends on $B$ only through $B(x)$. Consider the query $x = 0.25 - \delta/2$. Both witnesses assign it the same label, $B^{(1)}(x) = B^{(2)}(x) = 0$ (since $x < 0.25 < 0.75$), so $(K \circ \cE_{\mathrm{coarse}})(x, B^{(1)})$ and $(K \circ \cE_{\mathrm{coarse}})(x, B^{(2)})$ are identical distributions. But under $\cE_{\mathrm{loc}}^\delta$ the two responses differ: $\cE_{\mathrm{loc}}^\delta(x, B^{(1)}) = \text{abstain}$ (since $|x - 0.25| = \delta/2 < \delta$), whereas $\cE_{\mathrm{loc}}^\delta(x, B^{(2)}) = \text{reject}$ (since $|x - 0.75| > \delta$ and $B^{(2)}(x) = 0$). No signal-only kernel can reproduce $\cE_{\mathrm{loc}}^\delta$ at both $(x, B^{(1)})$ and $(x, B^{(2)})$, because it is forced to produce identical distributions there. \qed
\end{proof}

\subsection{Rate at the Coarse Extreme}
\label{sec:coarse-rate}

The lower bound at the coarse extreme rests on a standard Assouad construction.

\paragraph{The Assouad construction for $\cB$.}
\label{app:assouad-construction}
For boundary class $\cB$ with $\dvc(\cB) = d$, fix a shattered set $\{z_1, \ldots, z_d\} \subset \cX$. Such a set exists by definition of VC dimension. For each sign vector $\sigma \in \{-1, +1\}^d$, let $B_\sigma \in \cB$ denote a boundary with $B_\sigma(z_j) = (1 + \sigma_j)/2$. The shattering property ensures that all $2^d$ such boundaries lie in $\cB$. We refer to $\sigma_j$ as the $j$-th $B$-coordinate. The Assouad query distribution $\mu_{\mathrm{ass}}$ places mass $\Theta(\eps/d)$ on each $z_j$, with the \emph{total} mass on the shattered points equal to a large enough constant multiple of $\eps$ (strictly exceeding the $\eps$ error budget), and the remaining mass on a region where all $B_\sigma$ agree. Each misclassified $z_j$ contributes $\Theta(\eps/d)$ to the $\mu_{\mathrm{ass}}$-error, so to keep the error within the $\eps$ budget an $\eps$-learner must classify all but a small constant fraction of the $z_j$ correctly. (Taking the total mass to be exactly $\eps$, as a mass of $\eps/d$ per point would, makes the budget vacuous, since misclassifying every $z_j$ then still incurs error only $\eps$; a constant factor above $\eps$ makes the constraint binding and does not change the final rate.)

The lower bound uses the sharp form of Assouad's lemma.

\begin{lemma}[Sharp Assouad lemma~{\cite{yu1997assouad}}]
\label{lem:assouad-sharp}
Let $\Sigma = \{-1, +1\}^d$. For each $\sigma \in \Sigma$ let $P_\sigma$ be a distribution over observations, and define Hamming loss $\ell(\hat\sigma, \sigma) = (1/d) \sum_j \ind\{\hat\sigma_j \neq \sigma_j\}$. Writing $H^2(P, Q) = \tfrac{1}{2} \int (\sqrt{dP} - \sqrt{dQ})^2$ for the squared Hellinger distance, suppose for all adjacent pairs $(\sigma, \sigma')$ differing in coordinate $j$ that $H^2(P_\sigma, P_{\sigma'}) \leq \gamma_j$. Then for any estimator $\hat\sigma$,
\[
\max_{\sigma \in \Sigma} \E[\ell(\hat\sigma, \sigma)] \;\geq\; \frac{1}{2d} \sum_{j=1}^d \bigl(1 - \sqrt{\gamma_j}\bigr).
\]
\end{lemma}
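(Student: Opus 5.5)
The bound will follow from the standard Assouad argument: reduce to a two-point testing problem in each coordinate $j$, bound each test's success probability by the affinity of the two hypotheses, and average over coordinates.

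Fix a coordinate $j$ and an adjacent pair $(\sigma,\sigma')$ differing only in coordinate $j$. I first record the elementary two-point bound
\[
\inf_{\psi}\tfrac12\bigl(P_\sigma(\psi\neq\sigma_j)+P_{\sigma'}(\psi\neq\sigma'_j)\bigr)\;\geq\;\tfrac12\int\min(dP_\sigma,dP_{\sigma'})\;=\;\tfrac12\bigl(1-\mathrm{TV}(P_\sigma,P_{\sigma'})\bigr),
\]
where the infimum ranges over tests $\psi$. I then compare total variation with Hellinger. Under the normalization $H^2=\tfrac12\int(\sqrt{dP}-\sqrt{dQ})^2$, the standard inequality is $\mathrm{TV}\leq H\sqrt{2-H^2}\leq\sqrt2\,H$. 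To obtain the factor $1-\sqrt{\gamma_j}$ without a $\sqrt2$, I use instead the Le Cam form $\int\min(dP,dQ)\geq\tfrac12\bigl(\int\sqrt{dP\,dQ}\bigr)^2=\tfrac12(1-H^2)^2$. This gives an explicit lower bound on the per-coordinate testing error in terms of $\gamma_j$; I then check whether it dominates $\tfrac12(1-\sqrt{\gamma_j})$ or whether the constant needs adjusting. If neither route gives exactly $1-\sqrt{\gamma_j}$, I keep the TV route and accept the constant as stated in~\cite{yu1997assouad}, whose precise normalization I would follow.

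Next I average. Replace the max over $\Sigma$ by the uniform average over $\sigma\in\Sigma$, which can only decrease the left side. Write
\[
\E\,\ell=\frac1d\sum_j \Pr(\hat\sigma_j\neq\sigma_j).
\]
For each $j$, pair the vectors $\sigma$ with their $j$-flips $\sigma^{(j)}$. The estimator $\hat\sigma_j$ is then a test between $P_\sigma$ and $P_{\sigma^{(j)}}$, so its average error over the pair is at least the two-point bound above with $\gamma_j$. Summing over the $2^{d-1}$ pairs and over $j$ yields $\frac1{d}\sum_j\tfrac12(1-\sqrt{\gamma_j})$, which is the claimed bound.

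The main obstacle is matching the constant exactly: the Hellinger-to-affinity conversion must give $1-\sqrt{\gamma_j}$ rather than $1-\sqrt{2\gamma_j}$. I would settle this first by checking which normalization of $H^2$ makes $\mathrm{TV}\leq\sqrt{\gamma_j}$ valid. The rest is routine averaging, and the lemma is cited from~\cite{yu1997assouad} in any case.
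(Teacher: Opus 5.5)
Your averaging step is the standard Assouad reduction and is fine. Pairing each $\sigma$ with its $j$-flip shows that the uniform average over $\sigma$ of $P_\sigma(\hat\sigma_j\neq\sigma_j)$ is at least $\tfrac12(1-t_j)$, where $t_j$ is the largest total variation distance between $j$-adjacent $P_\sigma$.

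The gap is the conversion $t_j\le\sqrt{\gamma_j}$. You flag it as unresolved, then use it in your final sentence as if it were settled, and neither of your routes supplies it:
\begin{itemize}
\item The Le Cam route gives $\int\min(dP_\sigma,dP_{\sigma'})\ge\tfrac12(1-\gamma_j)^2$. Since $\tfrac12(1-\gamma)^2\le\tfrac{16}{27}(1-\sqrt{\gamma})$ on $[0,1]$, it never reaches $1-\sqrt{\gamma_j}$.
\item The total-variation route gives only $t_j\le\sqrt{\gamma_j(2-\gamma_j)}$.
\end{itemize}
Deferring to the cited constant cannot help either. With the normalization $H^2=\tfrac12\int(\sqrt{dP}-\sqrt{dQ})^2$, the lemma as stated is false; the paper only cites it and gives no proof. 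Take $d=1$, binary observations, $P_{+1}=\mathrm{Bern}(0.6)$ and $P_{-1}=\mathrm{Bern}(0.4)$. Then $H^2(P_{+1},P_{-1})=(\sqrt{0.6}-\sqrt{0.4})^2\approx 0.0202$, so the claimed lower bound is $\tfrac12(1-0.142)\approx 0.429$. Yet the estimator that reports the observed bit errs with probability exactly $0.4$ under both values of $\sigma$. The product family $P_\sigma=\bigotimes_{k}\mathrm{Bern}(\tfrac12+0.1\,\sigma_k)$ gives the same violation in every dimension $d$. More generally, for $\mathrm{Bern}(\tfrac12\pm\eta)$ one has $t=2\eta>H$ for every $\eta\in(0,\tfrac12)$, and $t/H\to\sqrt2$ as $\eta\to 0$. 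So the factor $\sqrt2$ you were trying to avoid is genuinely there.

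The repair is a change of convention, after which your argument goes through verbatim. By Cauchy--Schwarz, $\int|dP-dQ|\le\bigl(\int(\sqrt{dP}-\sqrt{dQ})^2\bigr)^{1/2}\bigl(\int(\sqrt{dP}+\sqrt{dQ})^2\bigr)^{1/2}$, i.e.\ $t\le\sqrt{1-\rho^2}$ with $\rho=\int\sqrt{dP\,dQ}$.
\begin{itemize}
\item If $\gamma_j$ bounds the unnormalized quantity $\int(\sqrt{dP}-\sqrt{dQ})^2=2(1-\rho)$, then $1-\rho^2\le 2(1-\rho)\le\gamma_j$. This gives $t_j\le\sqrt{\gamma_j}$, and your averaging yields exactly $\tfrac1{2d}\sum_j(1-\sqrt{\gamma_j})$.
\item If you keep the paper's factor $\tfrac12$, the same computation gives $t_j\le\sqrt{2\gamma_j}$. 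The correct conclusion is then $\tfrac1{2d}\sum_j\bigl(1-\sqrt{2\gamma_j}\bigr)$.
\end{itemize}
Either version serves the lemma's only use, the lower bound in Theorem~\ref{thm:learner-value-coarse}. That argument needs $H^2$ below a small constant, so the $\sqrt2$ changes constants there and not the rate.
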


\begin{theorem}[Learner's value under coarse abstention]
\label{thm:learner-value-coarse}
For any boundary class $\cB$ with $\dvc(\cB) = d$, query distribution $\mu$, and abstention budget $\bar\alpha \in [0, 1)$,
\[
L^*_L(\cE_{\mathrm{coarse}}; \eps) = \tilde\Theta\!\left( \frac{d \cdot \dis_{\cB,\mu}(\eps) \log(1/\eps)}{1-\bar\alpha} \right).
\]
In the worst case $\dis_{\cB,\mu}(\eps) = \Theta(1/\eps)$, and the bound becomes $\tilde\Theta(d \log(1/\eps)/((1-\bar\alpha)\eps))$. For favorable classes such as linear classifiers under log-concave $\mu$~\cite{wang2011smoothness}, $\dis_{\cB,\mu}(\eps) = O(1)$ and the bound becomes $\Theta(d \log(1/\eps)/(1-\bar\alpha))$, matching the localizing rate up to constants.
\end{theorem}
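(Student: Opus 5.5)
The proof splits into an upper bound via active learning under random label erasure and a lower bound via the Assouad construction together with Lemma~\ref{lem:assouad-sharp}.

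\emph{Upper bound.} First observe that $\cE_{\mathrm{coarse}}$ acts as an erasure channel. Each query returns the truthful label with probability $1-\bar\alpha$, independently of $(x,B)$, and otherwise returns ``abstain''. The abstain signal is independent of $B$, so it carries no information. The receiver can therefore simulate any label-query active learner by re-issuing each query until it gets a non-abstain answer. The number of attempts per label is geometric with mean $1/(1-\bar\alpha)$, and a Chernoff bound over the whole run shows that $O(\cdot/(1-\bar\alpha))$ queries suffice with probability $5/6$. I would instantiate the learner with CAL (disagreement-based active learning) in the realizable setting. Its label complexity is $O(\dis_{\cB,\mu}(\eps)\,(d\log\dis_{\cB,\mu}(\eps)+\log\log(1/\eps))\log(1/\eps))$ (Hanneke~\cite{hanneke2014theory}). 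Union-bounding the two failure events then gives the $\tilde O\bigl(d\,\dis_{\cB,\mu}(\eps)\log(1/\eps)/(1-\bar\alpha)\bigr)$ upper bound.

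\emph{Lower bound.} I would apply the Assouad construction of Sect.~\ref{sec:coarse-rate}, but place the shattered points inside a disagreement region so that the coefficient appears. Take $r\ge\eps$ attaining the supremum defining $\dis_{\cB,\mu}(\eps)$. The region $\mathrm{DIS}(B,r)$ has mass about $\dis\cdot r$, while each single alternative $B'$ differs from $B$ only on mass $r$. An adaptive learner must localize each coordinate $\sigma_j$ by querying $z_j$, or a region that carries the same label information. Under $\mu_{\mathrm{ass}}$, a query elsewhere yields identical distributions for $\sigma$ and $\sigma^{\oplus j}$. One answered query at $z_j$ resolves $\sigma_j$ completely, so I cannot charge Hellinger per query in the raw form. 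Instead I would charge it through the erasure.

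Let $n_j$ be the expected number of queries aimed at the region that discriminates coordinate $j$. By the chain rule for adaptive protocols, $H^2(P_\sigma,P_{\sigma^{\oplus j}})\le 1-\bar\alpha^{\,n_j}\lesssim n_j(1-\bar\alpha)$. Plugging this into Lemma~\ref{lem:assouad-sharp} shows that a constant fraction of the coordinates need $n_j\gtrsim 1/(1-\bar\alpha)$. To obtain the extra $\dis\log(1/\eps)$ factor, I would take the $z_j$ to be not single points but hard-to-find regions. Make each coordinate's discriminating set a $1/\dis$-fraction of a disagreement region that has no further structure, and nest $\log(1/\eps)$ scales, as in Hanneke's lower bound for threshold-like classes. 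A learner who cannot identify the informative set then needs about $\dis$ queries per coordinate per scale. Summing over the $d$ coordinates gives $\Omega(d\,\dis/(1-\bar\alpha))$, and the $\log(1/\eps)$ factor is absorbed into $\tilde\Theta$ where the construction only yields it up to polylog terms.

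\emph{Special cases and main obstacle.} The two special cases then follow directly. Thresholds or intervals under a point mass plus uniform mass give $\dis=\Theta(1/\eps)$. Homogeneous linear separators under log-concave $\mu$ give $\dis=O(\sqrt d)$, and a constant under isotropic conditions, by~\cite{wang2011smoothness}; substituting this yields the stated rate. I expect the main obstacle to be the lower bound's dependence on $\dis$. Adaptive learners can concentrate queries on the disagreement region, so the plain Assouad argument only gives $d/(1-\bar\alpha)$. The construction has to make the region of disagreement large relative to each coordinate's witness set while keeping $\dvc=d$. My first attempt would be a product of $d$ copies of the star/threshold instance that attains $\dis=1/\eps$, with Hellinger bounds computed per copy.
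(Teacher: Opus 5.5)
Your upper bound is the paper's argument: treat $\cE_{\mathrm{coarse}}$ as an independent erasure channel, run CAL on the surviving labels, and pay a factor $1/(1-\bar\alpha)$. Your re-query-plus-Chernoff bookkeeping is, if anything, more careful.

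\textbf{The lower bound has a genuine gap.} The statement asserts $\tilde\Omega\bigl(d\,\dis_{\cB,\mu}(\eps)/(1-\bar\alpha)\bigr)$ for the \emph{given} pair $(\cB,\mu)$. Every ingredient of your lower bound runs on a $\mu$ and a class structure that you choose: the plain Assouad step with mass placed on the $z_j$, the hidden-region refinement, and the product of star instances. So at best it certifies the rate at one worst-case instance.

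No argument can do better, because the per-instance claim is false. Let $\mu$ be a point mass at $x_0$ and let $\cB$ have large $d$. Then $\dis_{\cB,\mu}(\eps)$ sits at its floor value $1$, so the claimed rate is $\tilde\Theta(d/(1-\bar\alpha))$. Yet querying $x_0$ until a label comes back, and outputting any $\hat B \in \cB$ with that label at $x_0$, $\eps$-learns with $O(1/(1-\bar\alpha))$ queries, independent of $d$. The disagreement coefficient is an upper-bound parameter for disagreement-based learners. It is not a lower bound for a fixed $(\cB,\mu)$, least of all when the receiver may query anywhere in $\cX$.

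You did correctly spot what the paper's own proof glosses over. Its Assouad step charges each round only $O((1-\bar\alpha)\eps/d)$ of Hellinger, by treating $x_t$ as hitting $z_j$ with $\mu_{\mathrm{ass}}$-probability. A receiver that simply queries the $z_j$ directly defeats this: it resolves every $\sigma_j$ with $O(d/(1-\bar\alpha))$ queries in total. The paper then imports the $\dis$ and $\log(1/\eps)$ factors from worst-case constructions (Hanneke's Theorem~4.1, Beygelzimer et al.). So neither your route nor the paper's yields a per-instance lower bound.

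\textbf{What is provable.} You can prove the upper bound for every $(\cB,\mu)$, plus a matching lower bound at some worst-case instance. Your ``first attempt'' delivers the latter once made concrete, using counting rather than Hellinger:
\begin{itemize}
\item Take $d$ disjoint blocks, each of $\mu$-mass $1/d$ and uniform within, and let $\cB$ be the unions of one interval per block, so $\dvc = 2d$.
\item Hide the target in block $j$ in one of $m \asymp 1/\eps$ disjoint cells chosen uniformly at random.
\item Each answered query tests a single cell of a single block.
\item With the constant in $m$ small enough, every block whose cell is not identified costs error at least a large constant times $\eps/d$.
\end{itemize}
So an $\eps$-learner needs $\Omega(dm)$ answered queries, i.e.\ $\Omega(d/(\eps(1-\bar\alpha)))$ queries. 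On this instance $\dis_{\cB,\mu}(\eps) = \Theta(1/\eps)$, and CAL matches the bound up to logarithms.

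\textbf{Your special cases also need repair.}
\begin{itemize}
\item Thresholds satisfy $\dis_{\cB,\mu} \le 2$ under every $\mu$, so only intervals or singletons witness $\dis = \Theta(1/\eps)$.
\item Homogeneous linear separators have $\dis = \Theta(\min(\sqrt d, 1/\eps))$ already under the standard Gaussian, which is isotropic log-concave. Your claim of a constant ``under isotropic conditions'' is therefore wrong.
\item Substituting $\sqrt d$ into CAL gives $\tilde O(d^{3/2}\log(1/\eps)/(1-\bar\alpha))$, not the stated rate. The paper's ``$O(1)$'' can only mean bounded in $\eps$.
\item The matching $\Omega(d\log(1/\eps)/(1-\bar\alpha))$ in that case comes from Fano, with at most $1-\bar\alpha$ bits per query, not from $\dis$.
\end{itemize}
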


\begin{proof}
\emph{Upper bound.} Under $\cE_{\mathrm{coarse}}$, each query produces ``abstain'' with probability $\bar\alpha$ independent of everything, and otherwise the truthful label. The receiver discards abstain rounds. The remaining $L(1-\bar\alpha)$ effective rounds are noiseless labeled samples on adaptively chosen queries, which is the realizable active-learning setting~\cite{balcan2006agnostic,hanneke2014theory}. By Hanneke~\cite[Theorem 5.1]{hanneke2014theory}, the disagreement-based active learner CAL (Cohn--Atlas--Ladner) achieves $\eps$-learning with effective query count
\[
\tilde O\!\left( \dis_{\cB,\mu}(\eps) \cdot d \cdot \log(1/\eps) \right).
\]
Dividing by $1-\bar\alpha$ gives the total-query upper bound.

\emph{Lower bound.} We use the Assouad construction above, in which each shattered point $z_j$ carries mass $\Theta(\eps/d)$ and the total mass on $\{z_j\}$ is a fixed constant multiple $c\eps$ of $\eps$ with $c$ large enough (say $c > 8$). For adjacent sign vectors $\sigma, \sigma'$ differing in coordinate $j$ only, the response distributions of $\cE_{\mathrm{coarse}}$ on $L$ adaptive queries differ only on the event ``$x_t = z_j$ \emph{and} non-abstain''. The probability of this event is $O((1-\bar\alpha)\eps/d)$ per round, and the bound holds even when $x_t$ is chosen adaptively, since by Bayes the marginal probability of $x_t = z_j$ over $\mu_{\mathrm{ass}}$-induced strategies is $O(\eps/d)$. Tensorizing the per-round Hellinger contributions gives $H^2(P_\sigma^{(L)}, P_{\sigma'}^{(L)}) = O(L(1-\bar\alpha)\eps/d)$. By the sharp Assouad lemma (Lemma~\ref{lem:assouad-sharp}), whenever $L = O(d/((1-\bar\alpha)\eps))$ with a small enough constant the maximum expected Hamming error over $\sigma$ is at least $1/4$, i.e.\ a constant fraction of the coordinates $z_j$ are misclassified. Each misclassified $z_j$ contributes its mass $\Theta(\eps/d)$ to the $\mu_{\mathrm{ass}}$-error, so misclassifying a $1/4$-fraction incurs error $\Omega(c\eps) > \eps$ for $c > 8$, exceeding the budget. Hence $\eps$-learning forces $L \geq \Omega(d/((1-\bar\alpha)\eps))$ for the sign-vector parameter. (The constant $c$ enters only the lower-order constants and does not change the rate.)

The matching $\log(1/\eps)$ factor and the $\dis_{\cB,\mu}(\eps)$ dependence follow from the realizable lower bound of Hanneke~\cite[Theorem 4.1]{hanneke2014theory}, which lower-bounds the label complexity of $\eps$-learning by the logarithm of the $\eps$-covering number of $\cB$ under $\mu$; for a VC class on adversarially-chosen $\mu$ this logarithm is $\Omega(d \log(1/\eps))$. Combined with the Assouad bound on adaptive Hellinger and the disagreement-coefficient construction of Beygelzimer et al.~\cite{beygelzimer2009importance} for $\dis = \Omega(1/\eps)$, the worst case over $\cB$ gives $L \geq \tilde\Omega(d \log(1/\eps) / ((1-\bar\alpha)\eps))$. Dividing by $1-\bar\alpha$ recovers the total-query lower bound. \qed
\end{proof}

\subsection{Rate at the Boundary-Localizing Extreme}
\label{sec:loc-rate}

The localizing rate requires a mild geometric regularity condition.

\begin{assumption}[Boundary regularity]
\label{ass:regularity}
\sloppy There exist a shattered set $\{z_1, \ldots, z_d\}$ and directions $\{v_1, \ldots, v_d\}$ such that for each $j$ and each $B \in \cB$, the map $t \mapsto B(z_j + t v_j)$ is monotone, with a crossing point that varies continuously in $B$.
\end{assumption}

Linear classifiers, axis-aligned boxes and rectangles, polynomial threshold functions of bounded degree, and decision trees of bounded depth all satisfy this assumption~\cite{hanneke2014theory}.

\begin{theorem}[Learner's value under localizing abstention]
\label{thm:learner-value-loc}
Suppose $\cB$ satisfies Assumption~\ref{ass:regularity} and the threshold satisfies $\delta = O(\eps)$. Then
\[
L^*_L(\cE_{\mathrm{loc}}^\delta; \eps) = \Theta\!\left(d \log(1/\eps)\right),
\]
independently of the abstention budget $\bar\alpha$ (Remark~\ref{rem:delta}).
\end{theorem}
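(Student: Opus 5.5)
The proof splits into an upper bound obtained by an explicit receiver protocol and an information-theoretic lower bound that matches it.

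\emph{Upper bound.} Take the shattered set $\{z_1,\dots,z_d\}$ and directions $\{v_1,\dots,v_d\}$ from Assumption~\ref{ass:regularity}. For each $j$ the receiver runs a bisection on the segment $t \mapsto z_j + t v_j$. By monotonicity, $B(z_j + t v_j)$ changes value at a single crossing point $t_j^*(B)$. Each query returns one of three signals:
\begin{itemize}[nosep]
\item ``accept'' or ``reject'' tells the receiver on which side of $t_j^*$ the query lies, which is ordinary bisection;
\item ``abstain'' certifies that $\mathrm{dist}(z_j + t v_j, \partial B) < \delta$, and so localizes the crossing to a $\delta$-neighborhood of the query.
\end{itemize}
Since $\cE_{\mathrm{loc}}^\delta$ is deterministic, no repetition is needed. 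After $O(\log(1/\eps))$ queries per coordinate the receiver knows every $t_j^*$ to precision $O(\eps)$, using $\delta = O(\eps)$ to absorb the abstain ambiguity. This gives $O(d\log(1/\eps))$ queries in total.

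The receiver then outputs any $\hat B \in \cB$ whose crossing points match the estimates. The continuity clause of Assumption~\ref{ass:regularity} is needed to turn ``crossings within $O(\eps)$'' into $\Pr_\mu[\hat B \neq B] \le \eps$, after rescaling constants. This step is the main obstacle. Continuity alone gives no modulus, and $d$ crossing points need not determine $B$ globally.

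My first attempt is to read the assumption, as the paper evidently intends, as saying that the crossing coordinates $(t_1^*,\dots,t_d^*)$ parametrize $\cB$ with a Lipschitz map into $\mu$-disagreement. This holds for thresholds, boxes and linear classifiers under bounded densities. If this fails, the fallback is to interleave the bisections with the label information that is already available, and to run version-space bisection: each abstain cuts the version space by a constant fraction in $\mu$-measure. That yields $O(\log N(\eps))$ queries, where $N(\eps) = (1/\eps)^{O(d)}$ is the covering number.

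\emph{Lower bound.} Every query returns one of three signals, so an $L$-query transcript takes at most $3^L$ values. An $\eps$-learner with success probability $2/3$ must distinguish a $2\eps$-packing of $\cB$ under $\mu$. For a VC class with $\dvc(\cB)=d$ and a suitable $\mu$, this packing has size $(1/\eps)^{\Omega(d)}$; for example, take $d$ independent threshold coordinates on the shattered directions. A Fano-type counting argument, or the covering-number bound of Hanneke~\cite[Theorem 4.1]{hanneke2014theory} already invoked in the proof of Theorem~\ref{thm:learner-value-coarse}, then gives $L \log 3 \ge \Omega(d\log(1/\eps))$.

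Finally, neither bound involves $\bar\alpha$. The abstain signal only ever adds information, and the protocol never relies on the $\mu$-mass of any region, so the rate holds independently of the budget, as stated.
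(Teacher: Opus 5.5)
Your proposal follows the paper's proof in both halves. The upper bound is the same coordinate-wise bisection along the $v_j$: a label halves the interval, an abstain closes the coordinate at width $2\delta \le \eps$, and no repetition is needed because $\cE_{\mathrm{loc}}^\delta$ is deterministic. The lower bound is the same Fano count of at most $\log_2 3$ bits per ternary response against $(1/\eps)^{\Omega(d)}$ threshold-type hypotheses.

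The step you single out as the main obstacle is not resolved in the paper either. The paper simply asserts that ``to $\eps$-learn $B$ it suffices to localize the crossing along each $v_j$ to within $\eps$.'' Your Lipschitz reading of Assumption~\ref{ass:regularity} is therefore exactly the hypothesis the paper's argument relies on without stating it.

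That reading must also cover an inference that you and the paper both make without comment: that an abstain at $z_j + t v_j$ puts the crossing within $\delta$ of $t$. Abstention certifies only Euclidean proximity to \emph{some} part of $\partial B$. For a hyperplane with unit normal $n$ and unit $v_j$, this gives only $|t - t_j^*| < \delta/|\langle n, v_j\rangle|$. For piecewise boundaries such as boxes, the nearby piece may be a face that the line never meets. You therefore also need $v_j$ to be uniformly transverse to the relevant piece of $\partial B$.

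Drop the version-space fallback. You cannot force abstains, and without a splitting lemma a label response need not remove a constant fraction of the version space. That is precisely the obstruction $\dis_{\cB,\mu}$ measures in Theorem~\ref{thm:learner-value-coarse}.

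On the lower bound, your $2\eps$-packing formulation is the right reduction and is more careful than the paper's. The paper applies Fano to exact identification of the grid parameter $w$, even though an $\eps$-learner need not identify it: under uniform $\mu$ on $[0,1]^d$, grid-adjacent orthants are at $\mu$-distance at most $\eps$. However, your claim that every VC-$d$ class admits a $(1/\eps)^{\Omega(d)}$ packing under a suitable $\mu$ is false. The $2^d$ orthants with thresholds in $\{1/4, 3/4\}$ have VC dimension $d$ and satisfy Assumption~\ref{ass:regularity}. Yet querying one point per coordinate, where no abstain occurs once $\delta$ is small, identifies them with $d$ queries. So, as in the paper, the $\Omega(d\log(1/\eps))$ bound holds for a specific rich class such as your independent thresholds, not for every class meeting the assumption.
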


\begin{proof}
\emph{Upper bound: coordinate-wise binary search.} Fix the shattered set $\{z_j\}_{j=1}^d$ and monotone directions $\{v_j\}_{j=1}^d$ of Assumption~\ref{ass:regularity}; to $\eps$-learn $B$ it suffices to localize the crossing along each $v_j$ to within $\eps$.

For each $j$, maintain an interval $I_j = [a_j, b_j]$ containing the crossing along $v_j$, initially of length $O(1)$, and query its midpoint $x = z_j + ((a_j+b_j)/2) v_j$. A label response halves $I_j$ by monotonicity; an abstain response ends the coordinate outright, certifying that the crossing lies within $\delta$ of the midpoint, a window of width $2\delta \leq \eps$ (choosing the constant in $\delta = O(\eps)$ accordingly). Every response is certified progress: abstention is deterministic given the query --- far queries always draw labels, near ones the witness --- so nothing is erased, no repetition is needed, and the budget, binding on $\mu$-traffic rather than the attack stream (Sect.~\ref{sec:model}), never taxes the receiver. After at most $\lceil \log_2(1/\eps) \rceil$ queries per coordinate, $|I_j| \leq \eps$, for $O(d \log(1/\eps))$ queries in total.

\emph{Lower bound: Fano on ternary signals.} Consider axis-aligned thresholds. For $w \in \{0, \eps, 2\eps, \ldots, 1\}^d$ let $B_w(x) = \ind\{\forall j: x_j > w_j\}$. There are $\lfloor 1/\eps \rfloor^d$ such boundaries, and this class satisfies Assumption~\ref{ass:regularity}. Each query response under $\cE_{\mathrm{loc}}^\delta$ takes one of three values, accept, reject, or abstain, so each query carries at most $\log_2 3$ bits. Identifying $w$ requires $\log(1/\eps^d) = d \log(1/\eps)$ bits. By Fano's inequality~\cite{cover2006elements}, any estimator with error probability below $1/2$ requires $I(w; \text{responses}) \geq \tfrac{1}{2}\log_2 M - 1$, where $M = \lfloor 1/\eps\rfloor^d$ so $\log_2 M = d\log_2\lfloor 1/\eps\rfloor$. Since the $L$ ternary responses carry $I(w; \text{responses}) \leq L \log_2 3$ bits, we obtain $L \geq (d \log_2\lfloor 1/\eps\rfloor - 2)/(2\log_2 3) = \Omega(d \log(1/\eps))$. No coverage factor enters: abstains are signal symbols already counted in the $\log_2 3$, not erasures. \qed
\end{proof}

\begin{remark}[Role of $\delta$]
\label{rem:delta}
The budget reaches the learner only through the calibrated width $\delta(\bar\alpha,\mu,\nu)$, and $\delta = O(\eps)$ is what makes abstention sharpen rather than cap resolution. If $\delta \gg \eps$, every query in the surviving interval is abstained on and an abstain certifies proximity only at scale $\delta$, so the window argument degrades to error $\max(\eps,\delta)$ in $\Theta(d\log(1/\max(\eps,\delta)))$ queries. The exact rule could in principle escape this floor by binary-searching its sharp label-to-abstain edge, which sits at known offset $\delta$ from $\partial B$; this knife-edge property fails for every other member of Definition~\ref{def:localizing} and for approximate $\delta$, so we do not rely on it. A generous budget, $\delta(\bar\alpha) \gg \eps$, thus robustly floors the learner's resolution at $\delta(\bar\alpha)$; the theorem's regime is $\bar\alpha$ tuned so that $\delta(\bar\alpha) = O(\eps)$.
\end{remark}

\section{Divergence and the Pareto Frontier}
\label{sec:divergence}

We now compare the two extremal experiments against the gaming and learning receivers.

\begin{theorem}[Divergence at the two extremes]
\label{thm:divergence}
Suppose $\cB$ satisfies Assumption~\ref{ass:regularity} and the principal operates under abstention budget $\bar\alpha \in [0,1)$. Then:
\begin{enumerate}[nosep]
\item \emph{Gaming gap.} If $\delta \geq \delta_G$, then $V_G(\cE_{\mathrm{loc}}^\delta; L) = 0$ for any finite $L$. Under $\cE_{\mathrm{coarse}}$, $V_G(\cE_{\mathrm{coarse}}; L) \to 1$ as $L \to \infty$.
\item \emph{Learning gap.} At worst-case $\dis_{\cB,\mu}(\eps) = \Theta(1/\eps)$, the learner needs a factor of $1/\eps$ more queries to extract the boundary under $\cE_{\mathrm{coarse}}$ than under $\cE_{\mathrm{loc}}^\delta$. By Theorems~\ref{thm:learner-value-coarse} and~\ref{thm:learner-value-loc} the ratio is
\[
\frac{L^*_L(\cE_{\mathrm{coarse}}; \eps)}{L^*_L(\cE_{\mathrm{loc}}^\delta; \eps)} \;=\; \tilde\Theta\!\left(\frac{1}{\eps}\right),
\]
which is unbounded as $\eps \to 0$.
\end{enumerate}
\end{theorem}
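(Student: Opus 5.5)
The statement has two parts, and I will prove them separately. The gaming gap follows directly from the definitions of the two experiments. The learning gap is a corollary of Theorems~\ref{thm:learner-value-coarse} and~\ref{thm:learner-value-loc}.

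\emph{Gaming gap.} Assume $\delta \geq \delta_G$. Fix any boundary $B$ and any protocol of the gaming receiver. The receiver's utility counts an accept signal only at a query $x$ with $\mathrm{dist}(x,\partial B) < \delta_G \leq \delta$. At every such $x$, $\cE_{\mathrm{loc}}^\delta$ abstains with probability one, because abstention is deterministic on the $\delta$-neighbourhood of $\partial B$. So on every sample path and for every adaptive strategy, no qualifying accept is ever produced, and the utility is identically zero for every finite $L$. Taking the supremum over protocols gives $V_G(\cE_{\mathrm{loc}}^\delta;L)=0$.

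For $\cE_{\mathrm{coarse}}$, I will first exhibit a point $x^\ast$ with $B(x^\ast)=1$ and $\mathrm{dist}(x^\ast,\partial B)<\delta_G$. I expect this step to need the most care, because it requires the accept region to be nonempty with $\partial B$ lying on its closure. Assumption~\ref{ass:regularity} supplies this: along a direction $v_j$ the map $t\mapsto B(z_j+tv_j)$ is monotone with a crossing point. Stepping just past that crossing on the accept side, by less than $\delta_G$, gives $x^\ast$. The receiver, who knows $B$, queries $x^\ast$ repeatedly. Each query returns accept independently with probability $1-\bar\alpha>0$, so
\[
V_G(\cE_{\mathrm{coarse}};L)\ \geq\ 1-\bar\alpha^L\ \to\ 1 .
\]
As a by-product, $L^\ast_G(\cE_{\mathrm{coarse}})=1/(1-\bar\alpha)$ by the geometric stopping-time remark in Sect.~\ref{sec:model}.

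\emph{Learning gap.} Theorem~\ref{thm:learner-value-coarse} at worst-case $\dis_{\cB,\mu}(\eps)=\Theta(1/\eps)$ gives
\[
L^\ast_L(\cE_{\mathrm{coarse}};\eps)=\tilde\Theta\bigl(d\log(1/\eps)/((1-\bar\alpha)\eps)\bigr).
\]
Theorem~\ref{thm:learner-value-loc}, in its regime $\delta=O(\eps)$ (see Remark~\ref{rem:delta}), gives
\[
L^\ast_L(\cE_{\mathrm{loc}}^\delta;\eps)=\Theta(d\log(1/\eps)).
\]
To form the ratio I will pair the upper and lower bounds in the right directions: the lower bound of the coarse rate is divided by the upper bound of the localizing rate, and vice versa. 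The factors $d\log(1/\eps)$ cancel. For fixed $\bar\alpha<1$ the factor $1/(1-\bar\alpha)$ is a constant, so the ratio is $\tilde\Theta(1/\eps)$, which diverges as $\eps\to 0$.

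Both theorems are applied to the same pair $(\cB,\mu)$. I will therefore check that the worst-case pair used for the coarse lower bound also satisfies Assumption~\ref{ass:regularity}. One example is thresholds or axis-aligned boxes under a distribution with $\dis=\Theta(1/\eps)$, as in the Beygelzimer et al.\ construction cited in Theorem~\ref{thm:learner-value-coarse}. Regularity is what makes the localizing upper bound valid there.
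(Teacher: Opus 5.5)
Your proposal is correct and follows the paper's proof essentially step for step: deterministic abstention on the $\delta$-tube rules out every qualifying accept under $\cE_{\mathrm{loc}}^\delta$, and repeated queries at an accepted near-boundary point give $1-\bar\alpha^L$ under $\cE_{\mathrm{coarse}}$. Part 2 is likewise the quotient of Theorems~\ref{thm:learner-value-coarse} and~\ref{thm:learner-value-loc}. Using Assumption~\ref{ass:regularity} to produce $x^\ast$, pairing upper bounds against lower bounds, and flagging the $\delta=O(\eps)$ regime are all details the paper leaves implicit, so they add care rather than change the approach.

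Your closing non-vacuity check is not needed, because Part 2 already assumes a class satisfying Assumption~\ref{ass:regularity} with $\dis_{\cB,\mu}(\eps)=\Theta(1/\eps)$. One-dimensional thresholds could not serve as that example anyway, since their disagreement coefficient is at most $2$ under every $\mu$.
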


\begin{proof}
\emph{Part 1 (Gaming gap).} A gaming receiver succeeds iff at least one query $x$ with $\mathrm{dist}(x, \partial B) < \delta_G$ produces signal ``accept''. Under $\cE_{\mathrm{loc}}^\delta$ the abstain region is exactly $\{x : \mathrm{dist}(x, \partial B) < \delta\}$. If the principal sets $\delta \geq \delta_G$, every query inside the gaming receiver's manipulation region is abstained on, so no ``accept'' signal arrives there. Hence $V_G(\cE_{\mathrm{loc}}^\delta; L) = 0$ for all $L$. Under $\cE_{\mathrm{coarse}}$ the abstention is independent of position. A query $x$ in the manipulation region with $B(x) = 1$ is accepted with probability $1 - \bar\alpha$ at each draw. After $L$ such queries the success probability is $1 - \bar\alpha^L$, which tends to $1$ as $L \to \infty$ for any $\bar\alpha < 1$.

\emph{Part 2 (Learning rate gap).} Immediate from Theorems~\ref{thm:learner-value-coarse} and~\ref{thm:learner-value-loc}: at $\dis_{\cB,\mu}(\eps) = \Theta(1/\eps)$ the two rates differ by the factor $\dis_{\cB,\mu}(\eps)/(1-\bar\alpha)$, i.e.\ $\tilde\Theta(1/\eps)$ for fixed $\bar\alpha$. \qed
\end{proof}

\begin{theorem}[Pareto frontier of the convex-combination family]
\label{thm:pareto}
Suppose $\cB$ satisfies Assumption~\ref{ass:regularity} and the localizing threshold satisfies $\delta \geq \delta_G$. The family $\cE_\lambda$, $\lambda \in (0,1]$, of Example~\ref{ex:two-experiments} consists of budget-$\bar\alpha$ experiments with
\[
L^*_G(\cE_\lambda) = \frac{1}{\lambda(1-\bar\alpha)},
\]
\[
L^*_L(\cE_\lambda; \eps) = \tilde\Theta\!\left( \min\!\left( \frac{d \cdot \dis_{\cB,\mu}(\eps) \log(1/\eps)}{1-\bar\alpha},\; \frac{d \log(1/\eps)}{(1 - \lambda)(1-\bar\alpha)} \right) \right).
\]
Both bounds reduce to Theorem~\ref{thm:learner-value-coarse} at $\lambda = 1$; as $\lambda \to 0$ the second branch exceeds the rate of Theorem~\ref{thm:learner-value-loc} by a $1/(1-\bar\alpha)$ factor explained in the proof. The achievable set $\{(L^*_G(\cE_\lambda), L^*_L(\cE_\lambda; \eps)) : \lambda \in (0,1]\}$ is closed under time-sharing, and no $\cE_{\lambda'}$ in the family dominates $\cE_\lambda$ on both objectives unless $\lambda' = \lambda$.
\end{theorem}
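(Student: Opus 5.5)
The plan is to prove the statement in four parts: the budget claim, the gaming rate, the learning rate (upper bound then lower bound), and the frontier/non-dominance claims.

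\emph{Budget.} Each query's signal is drawn from $\cE_{\mathrm{coarse}}$ with probability $\lambda$ and from $\cE_{\mathrm{loc}}^\delta$ otherwise, so the marginal abstain probability under $B\sim\nu$, $x\sim\mu$ is the mixture of the two abstain probabilities. This equals $\lambda\bar\alpha+(1-\lambda)\bar\alpha=\bar\alpha$, so every $\cE_\lambda$ is a budget-$\bar\alpha$ experiment.

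\emph{Gaming.} Fix a query $x$ in the manipulation region $\{\mathrm{dist}(x,\partial B)<\delta_G\}$ with $B(x)=1$. Since $\delta\ge\delta_G$, the localizing branch always abstains there, as in Part~1 of Theorem~\ref{thm:divergence}. The coarse branch accepts with probability $1-\bar\alpha$. Branch choices and coarse coins are independent across queries, so each query in the region is accepted independently with probability $q=\lambda(1-\bar\alpha)$. No query outside this set can count toward the gaming utility, so the best protocol just repeats such an $x$. The stopping time is then geometric, giving $L^*_G=1/q$ by the remark in Sect.~\ref{sec:model}.

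\emph{Learning, upper bound.} The receiver can always ignore structure and run the coarse algorithm. Outside the $\delta$-band every response is a truthful label or a position-independent abstain, so the CAL bound of Theorem~\ref{thm:learner-value-coarse} gives the first branch. Inside the band, an abstain carries information through the localizing branch, so I need to check that it does not corrupt CAL. If CAL discards abstains, the realizable analysis still goes through: abstains near the band occur at rate at least $1-\lambda$, so the only cost is extra queries, and I expect this to be absorbed into the $\tilde\Theta$. For the second branch, I will run the coordinate-wise binary search of Theorem~\ref{thm:learner-value-loc}.
\begin{itemize}
\item A label on a search step is always truthful, so it halves the interval by monotonicity.
\item An abstain is ambiguous: it comes from the localizing branch (a proximity certificate) or from a coarse erasure.
\item An abstain far from the boundary must be a coarse erasure. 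So the receiver resolves an abstain at a midpoint by re-querying until a label appears or repeated abstains make erasure unlikely.
\item A probe inside the band abstains with probability $(1-\lambda)+\lambda\bar\alpha$; a probe outside abstains with probability $\lambda\bar\alpha$.
\item A useful certificate arrives only through the localizing branch, at rate $1-\lambda$ per probe. The coarse erasures cost a factor $1/(1-\bar\alpha)$ per label step.
\end{itemize}
A union bound over the $d\log(1/\eps)$ steps adds only logarithmic repetitions. This gives $O(d\log(1/\eps)/((1-\lambda)(1-\bar\alpha)))$, and taking the better of the two algorithms yields the stated $\min$.

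\emph{Learning, lower bound, and the main obstacle.} The matching lower bound is the step I expect to be hardest. I would prove each branch separately and take the minimum. For the first branch, Theorem~\ref{thm:learner-value-coarse}'s Assouad/Hanneke bound applies once I check that, on the Assouad instance with $\delta=O(\eps)$, the localizing branch adds negligible Hellinger separation. For the second branch, I would adapt the Fano argument: each response is ternary, and the mutual information per query can be bounded by the information from the localizing component, roughly $(1-\lambda)\log_2 3$ plus the label information. If the $(1-\lambda)$ factor cannot be made tight, I would present the second branch as an achievable upper envelope. The $\tilde\Theta$ would then be justified only at the extremes $\lambda\in\{0,1\}$, which are covered by the earlier theorems.

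\emph{Frontier.} Time-sharing between $\cE_{\lambda_1}$ and $\cE_{\lambda_2}$ with weight $\eta$ is itself the per-query mixture $\cE_{\eta\lambda_1+(1-\eta)\lambda_2}$, so the achievable set is closed under time-sharing. For non-dominance, note that $L^*_G(\cE_\lambda)=1/(\lambda(1-\bar\alpha))$ is strictly decreasing in $\lambda$, while the learning bound is nondecreasing in $\lambda$. So any $\lambda'\neq\lambda$ is strictly worse on the gaming objective in one direction. In the other direction it is no better on learning, and strictly worse whenever the second branch is active.
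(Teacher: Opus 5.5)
\emph{Verdict.} Your proposal matches the paper's route for achievability, but it does not prove the theorem: the $\tilde\Theta$ lower bound is missing, your planned route to it cannot work, and the non-domination claim is false as stated near $\lambda=1$. Apart from the budget check, which the paper leaves to Example~\ref{ex:two-experiments}, your gaming computation, your two achievability strategies, and your time-sharing argument follow the paper's proof essentially step for step. You sensibly stop at closure of the family under per-query randomization, without the paper's extra claim that the values are linear in $\lambda$, which $1/(\lambda(1-\bar\alpha))$ is not.

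\emph{The lower bound.} The paper's proof also ends after exhibiting the two strategies and proves no lower bound for $0<\lambda<1$. Your fallback to an achievable envelope is therefore what the argument actually supports. Fano cannot produce a $(1-\lambda)$ factor: truthful labels carry $\Theta(1)$ bits per query for every $\lambda$, so Fano gives only $\Omega(d\log(1/\eps))$. More seriously, the $(1-\lambda)$ factor is an overcount, and the paper's proof makes the same one. Your fifth bullet overlooks your first: at an in-band midpoint a label, arriving with probability $\lambda(1-\bar\alpha)$, is truthful and halves $I_j$ just as well, so you never need to wait for a localizing certificate. Stop each midpoint at the first label or after $k=\lceil\log(10\,d\lceil\log_2(1/\eps)\rceil)/\log(1/\bar\alpha)\rceil$ consecutive abstains.
\begin{itemize}
\item At an in-band point, either outcome is valid progress.
\item At a far point, a false proximity certificate has probability $(\lambda\bar\alpha)^k\le\bar\alpha^k$.
\item At most $k\,d\lceil\log_2(1/\eps)\rceil$ queries are made, and a union bound keeps the failure probability below $1/10$.
\item Since $\log(1/\bar\alpha)\ge 1-\bar\alpha$, the total is $O(d\log(1/\eps)\log(d\log(1/\eps))/(1-\bar\alpha))$ for every $\lambda$.
\end{itemize}
Grant the binary-search step that the second branch rests on. Then that branch is loose by up to $1/(1-\lambda)$, for example by $\eps^{-1/2}$ at $1-\lambda=\sqrt{\eps}$ with $\dis_{\cB,\mu}(\eps)=1/\eps$, and no matching lower bound can exist. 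At $\lambda=1$ the same rule would even collapse the learning gap of Theorem~\ref{thm:divergence}. That tension lies in the results you inherit, namely the step ``localize the crossings along the $v_j$ $\Rightarrow$ $\eps$-learn'' in Theorem~\ref{thm:learner-value-loc}, not in your argument. Still, it means no two-sided $\tilde\Theta$ can be certified from them.

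\emph{Two smaller repairs.} First, in the CAL branch the in-band label rate is $\lambda(1-\bar\alpha)$. Discarding abstains therefore costs a factor $1/\lambda$ there, which is not polylogarithmic and cannot be absorbed into $\tilde\Theta$ as you propose. The paper glosses this with a coverage bound that holds only for $\mu$-distributed queries, not for CAL's boundary-concentrated ones. It is harmless for the $\min$. Claim the first branch only for $\lambda\ge 1/2$, where the overhead is at most $2$. For $\lambda<1/2$ the second branch is at most $2d\log(1/\eps)/(1-\bar\alpha)$, while $\dis_{\cB,\mu}(\eps)\ge 1$ keeps the stated minimum at least $d\log(1/\eps)/(1-\bar\alpha)$. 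So the second branch attains the minimum up to a factor $2$, whatever the first branch costs.

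Second, your qualifier ``strictly worse whenever the second branch is active'' is exactly the needed condition; draw its consequence rather than assert the claim as stated. With constants set to one, for $1-\lambda\le 1/\dis_{\cB,\mu}(\eps)$ the learning bound is flat while $L^*_G$ strictly decreases. Hence the crossover member $\cE_{\lambda_c}$, with $1-\lambda_c=1/\dis_{\cB,\mu}(\eps)$, dominates every $\cE_\lambda$ with $\lambda>\lambda_c$, including $\cE_{\mathrm{coarse}}$. The paper's inference from ``non-decreasing'' to non-domination is a non sequitur. Its own Fig.~\ref{fig:synthetic} plots $(1.43,1974)$ at $\lambda=1$ beside $(1.44,1974)$ at $\lambda=0.99$. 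The correct claim is non-domination for $\lambda\in(0,\lambda_c]$.
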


\begin{proof}
\emph{Convexity under time-sharing.} The family $\{\cE_\lambda\}_{\lambda \in (0,1]}$ is closed under randomization between two values of $\lambda$: a $t$-mixture of $\cE_{\lambda_1}$ and $\cE_{\lambda_2}$ produces $\cE_{t \lambda_1 + (1-t)\lambda_2}$, since both are convex combinations of the two extremes. The achievable values are linear in this $\lambda$-randomization in the relevant ranges, giving convexity of the achievable set within the family.

{\sloppy \emph{$L^*_G$ formula.} The gaming receiver queries near-boundary points $x$ with $\mathrm{dist}(x, \partial B) < \delta_G \leq \delta$. Under $\cE_\lambda$ such a query draws the coarse signal with probability $\lambda$, which returns ``accept'' with probability $1-\bar\alpha$ since $B(x) = 1$, and draws the localizing signal with probability $1-\lambda$, which abstains because the point lies within $\delta$. The per-query accept probability is therefore $\lambda(1-\bar\alpha)$, the stopping time is geometric, and $L^*_G(\cE_\lambda) = 1/(\lambda(1-\bar\alpha))$.\par}

\emph{$L^*_L$ formula.} The learner takes the better of two strategies. The first discards every abstain and runs the active learner of Theorem~\ref{thm:learner-value-coarse} on the remaining labels. Treating an abstain as an erasure only discards information, so the count it produces is a valid upper bound; the resulting label rate is at least the coverage $1-\bar\alpha$, and the analysis of Theorem~\ref{thm:learner-value-coarse} then gives $\tilde O\!\left(d \cdot \dis_{\cB,\mu}(\eps)\log(1/\eps)/(1-\bar\alpha)\right)$ queries, independent of $\lambda$. This is the first branch. The second runs the coordinate-wise binary search of Theorem~\ref{thm:learner-value-loc}, with one complication: under $\cE_\lambda$ an abstain is ambiguous, since the coarse component also abstains, with probability $\bar\alpha$ independent of position. The receiver repeats each midpoint until either a label arrives --- probability at least $\lambda(1-\bar\alpha)$ per draw, even inside the tube --- certifying a halving, or a run of abstains that a far point would produce with negligible probability, certifying proximity. Repetitions of order $1/((1-\lambda)(1-\bar\alpha))$ per step suffice, up to logarithmic factors absorbed by $\tilde\Theta$, giving $O\!\left(d\log(1/\eps)/((1-\lambda)(1-\bar\alpha))\right)$, the second branch; the factor prices disambiguation within the mixture, and the pure rule, having no coin abstains, carries none. The learner picks the smaller of the two. At $\lambda = 1$ the second branch diverges and the first remains, recovering Theorem~\ref{thm:learner-value-coarse}; as $\lambda \to 0$ the second branch reduces to Theorem~\ref{thm:learner-value-loc} and, since $\dis_{\cB,\mu}(\eps) \geq 1$, attains the minimum.

\emph{Within-family non-domination.} $L^*_G(\cE_\lambda) = 1/(\lambda(1-\bar\alpha))$ is strictly decreasing in $\lambda$. The first branch of $L^*_L(\cE_\lambda; \eps)$ is constant in $\lambda$ and the second is increasing in $\lambda$, so their minimum is non-decreasing in $\lambda$. Hence no $\cE_{\lambda'}$ in the family weakly improves on $\cE_\lambda$ in both objectives unless $\lambda' = \lambda$. \qed
\end{proof}

The frontier has two regimes. For small $\lambda$ the localizing component governs the learner's second strategy, $L^*_G$ blows up, and $L^*_L$ stays at $O(d \log(1/\eps)/(1-\bar\alpha))$. For large $\lambda$ the localizing component is rare, $L^*_G$ stays at $O(1/(1-\bar\alpha))$, and $L^*_L$ grows until the first strategy takes over. The crossover happens when $1 - \lambda \asymp 1/\dis_{\cB,\mu}(\eps)$: at worst-case $\dis_{\cB,\mu}(\eps) = \Theta(1/\eps)$ this is $\lambda \approx 1 - \eps$, and there both branches reach $\tilde\Theta(d \log(1/\eps)/\eps)$, recovering the coarse rate of Theorem~\ref{thm:learner-value-coarse}; for favorable classes with $\dis_{\cB,\mu}(\eps) = O(1)$ the crossover sits at $\lambda$ bounded away from $1$ and both branches reach $\Theta(d \log(1/\eps))$. We do not claim Pareto-optimality of $\cE_\lambda$ over arbitrary label-truthful budget-$\bar\alpha$ experiments outside the family.

\section{Experiments}
\label{sec:experiments}

We test the rate predictions of Theorems~\ref{thm:learner-value-coarse} and~\ref{thm:learner-value-loc} on seven binary-classification tasks\footnote{Code and full experimental details: \url{https://github.com/krimler/market-design2}.} spanning three modalities: tabular features (ACSIncome~\cite{ding2021folktables}, $d=68$), raw image pixels (MNIST 1 vs 7~\cite{lecun1998gradient} and Fashion-MNIST T-shirt vs Pullover~\cite{xiao2017fashion}, $d=784$), and text represented by embeddings, fixed vector representations computed by a frozen MiniLM-L6 sentence encoder~\cite{wang2020minilm} (SST-2 sentiment~\cite{socher2013recursive}, AG News Sports vs rest~\cite{zhang2015character}, Civil Comments toxicity~\cite{borkan2019nuanced}, TREC question-type NUM vs rest~\cite{li2002learning}, all $d=384$). Throughout this section $d$ denotes the input dimension of the (possibly embedded) feature vector. For linear classifiers in $\R^d$ this matches the VC dimension up to an additive one for the intercept. The principal (the \emph{victim} model) is an unregularized logistic regression on each task. The adversary trains a logistic-regression \emph{surrogate} on responses to $L$ random queries from a held-out pool. We measure \emph{fidelity} $F$, the fraction of held-out points on which surrogate and victim agree, and report the extraction error $1 - F$. We compare three extraction strategies:
\begin{itemize}[nosep]
\item \emph{Protocol A (random label-only).} Random queries, victim returns labels only. This is $\cE_{\mathrm{coarse}}$ with the abstain channel turned off.
\item \emph{ActiveThief.} The published label-only extraction baseline of Pal et al.~\cite{pal2020activethief}, which queries actively to maximize surrogate uncertainty. Same labels-only access as Protocol A, but with strategically chosen queries.
\item \emph{Protocol B (label + counterfactual).} Random queries, victim returns labels and the closest opposite-label point on its boundary. This is a boundary-localizing experiment in the sense of Definition~\ref{def:localizing}, though strictly more informative than the ternary $\cE_{\mathrm{loc}}^\delta$ of Theorem~\ref{thm:learner-value-loc}.
\end{itemize}

The counterfactual on each query is the closed-form orthogonal projection onto the principal's boundary, plus an infinitesimal step to the opposite side. Each $(L, \text{strategy})$ cell is averaged over $20$ seeds.

\begin{figure}[t]
\centering
\includegraphics[width=0.85\textwidth]{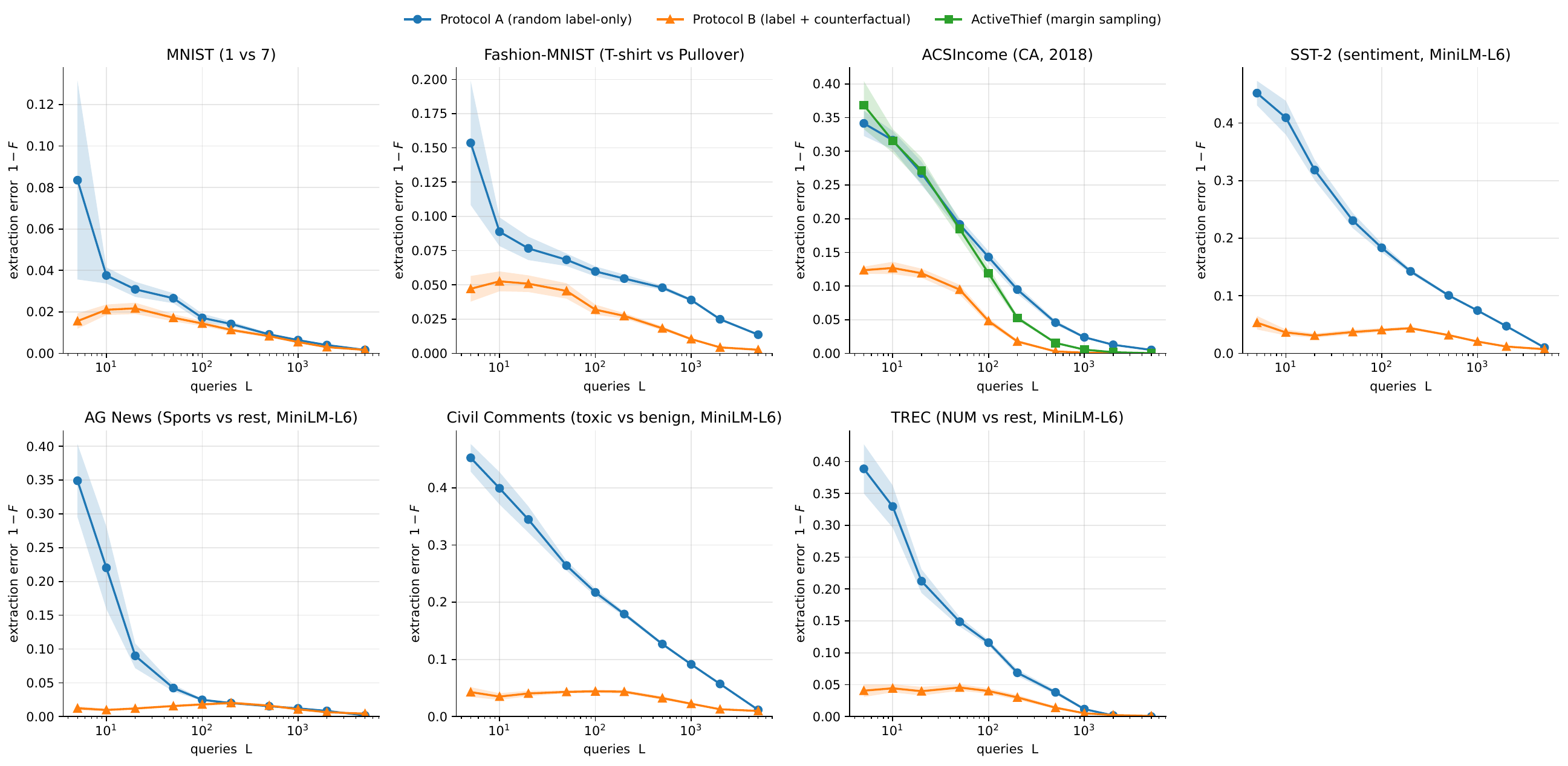}
\caption{Extraction error vs query budget on all seven datasets. Protocol A is random label-only access; Protocol B is label-plus-counterfactual access; ActiveThief appears on the ACSIncome panel only (Table~\ref{tab:activethief}). Shaded bands are $95\%$ confidence intervals over $20$ seeds. The gap is largest for $L \lesssim d$, where Protocol B has localized the boundary. ACSIncome ($d=68$) shows the predicted exponential cliff for Protocol B between $L=50$ and $L=200$; at $d=784$ (MNIST, Fashion-MNIST) no comparable cliff appears, though the exponential decay shape persists.}
\label{fig:fidelity}
\end{figure}

\paragraph{Cross-modal rate verification.} For each (dataset, protocol) we fit the predicted functional form on the empirical decay curve (Fig.~\ref{fig:fidelity}). Protocol A is fit to a power law $\log(1-F) = c - \rho \log L$. Logistic regression on i.i.d.\ realizable labels converges in 0-1 loss at rate $L^{-1/2}$ due to the surrogate-loss gap, so we expect $\rho \approx 1/2$ on benign distributions and slower on harder ones. Protocol B is fit to an exponential $\log(1-F) = c - \beta L$, with $\beta > 0$ predicted from Theorem~\ref{thm:learner-value-loc}. Table~\ref{tab:rates} reports the fits.

\begin{table}[t]
\centering
\caption{Fitted decay rates and end-to-end query gap at $\eps = 0.02$: power-law exponent $\rho$ for Protocol A, exponential rate $\beta$ (with fit $R^2$) for Protocol B.}
\label{tab:rates}
\setlength{\tabcolsep}{3pt}
\scriptsize
\begin{tabular}{lccccccc}
\toprule
Dataset & $d$ & Victim acc. & $\rho$ (A) & $\beta$ (B) & $R^2$ (B) & Gap at $\eps{=}0.02$ \\
\midrule
ACSIncome (CA, 2018)        & $68$  & $0.786$ & $0.80$ & $5.1 \times 10^{-3}$ & $0.92$ & ${>}25\times$ \\
TREC question type          & $384$ & $0.887$ & $1.12$ & $1.6 \times 10^{-3}$ & $0.95$ & $40\times$ \\
SST-2 sentiment             & $384$ & $0.831$ & $0.60$ & $3.7 \times 10^{-4}$ & $0.86$ & $200\times$ \\
Civil Comments toxic        & $384$ & $0.775$ & $0.60$ & $3.2 \times 10^{-4}$ & $0.84$ & $200\times$ \\
AG News Sports              & $384$ & $0.981$ & $0.58$ & $2.6 \times 10^{-4}$ & $0.72$ & ${\sim}5\times$ \\
MNIST 1 vs 7                & $784$ & $0.995$ & $0.56$ & $5.0 \times 10^{-4}$ & $0.83$ & ${\sim}5\times$ \\
Fashion T-shirt vs Pullover & $784$ & $0.941$ & $0.32$ & $6.1 \times 10^{-4}$ & $0.81$ & ${\sim}10\times$ \\
\bottomrule
\end{tabular}
\end{table}

Six of seven datasets fit Protocol A's exponent in $\rho \in [0.32, 0.80]$, clustered around the predicted $0.5$ from logistic regression's surrogate-loss convergence. The seventh (TREC) has $\rho = 1.12$, faster than the surrogate-loss prediction and approaching the $\rho = 1$ rate of optimal ERM on a realizable task. Protocol B's exponential fit is positive on all seven datasets with $R^2 \geq 0.72$, confirming the boundary-localizing prediction of Theorem~\ref{thm:learner-value-loc}. The end-to-end gap at $\eps = 0.02$ ranges from $5\times$ on the easy-to-extract topic and image tasks to $200\times$ on the harder text-classification tasks. The experiments validate the qualitative separation between coarse and boundary-localizing access; the precise Protocol A exponent is set by logistic regression rather than the active-learning lower bound.

\paragraph{Comparison to a published label-only baseline.} On ACSIncome we compare random label-only access (Protocol A) against ActiveThief~\cite{pal2020activethief}, the published active-learning extraction baseline, against label+counterfactual access (Protocol B). Table~\ref{tab:activethief} reports extraction error at five query budgets.

\begin{table}[t]
\centering
\caption{Extraction error $1 - F$ on ACSIncome under three access protocols, by query budget $L$.}
\label{tab:activethief}
\begin{tabular}{lccccc}
\toprule
$L$ & $5$ & $50$ & $200$ & $1000$ & $5000$ \\
\midrule
Protocol A (random)          & $0.342$ & $0.192$ & $0.095$ & $0.024$  & $0.005$ \\
ActiveThief                  & $0.369$ & $0.185$ & $0.053$ & $0.006$  & $0.0002$ \\
Protocol B (counterfactual)  & $0.124$ & $0.095$ & $0.018$ & $0.001$  & $0.00006$ \\
\bottomrule
\end{tabular}
\end{table}

ActiveThief is roughly $2\times$ faster than random sampling, with empirical exponent $\rho = 1.43$ versus the random baseline's $\rho = 0.80$. ActiveThief's exponent exceeds $1$, faster than the optimal active-learning rate suggests, reflecting finite-sample acceleration on this specific 68-dim task rather than the asymptotic rate. Even so, Protocol B remains $3$--$6\times$ ahead of ActiveThief in the practically relevant regime $L \in [200, 1000]$. The qualitative separation predicted by Theorem~\ref{thm:learner-value-loc} thus holds against the published optimized label-only attack as well as against random sampling.

\paragraph{Gaming-resistance verification (Theorem~\ref{thm:divergence} Part 1).} On ACSIncome and TREC, we simulate a gaming attacker that knows the victim's boundary, restricts queries to a held-out pool within $\delta_G$ of it ($\delta_G$ set to the $90$th percentile of pool distance-to-boundary), and submits up to $50$ queries per trial. Across $200$ trials and three response protocols, the attacker's success rate is $1.000$ under no abstention, $1.000$ under coarse $30\%$ abstention, and $0.000$ under boundary-localizing abstention with $\delta = \delta_G$, on both datasets. The attacker queries only victim-accepting points, so under coarse abstention a single non-abstained query confirms acceptance; localizing abstention covers every such query and blocks the attack entirely, matching Theorem~\ref{thm:divergence} Part 1.

\paragraph{Shape of the frontier (Theorem~\ref{thm:pareto}).} Figure~\ref{fig:synthetic} plots the two formulas of Theorem~\ref{thm:pareto} as $\lambda$ varies between the extremes, with the disagreement coefficient at its worst case $1/\eps$. The crossover of the proof is visible: the binary-search branch governs everywhere except the window $1 - \lambda \lesssim 1/\dis_{\cB,\mu}(\eps)$, inside which the active-learning branch takes over and $L^*_L$ climbs to the coarse rate of Theorem~\ref{thm:learner-value-coarse}. The figure is an evaluation of the bound rather than a simulation, so the endpoint magnitudes are illustrative.

\begin{figure}[t]
\centering
\begin{minipage}[t]{0.5\textwidth}
\centering
\begin{tikzpicture}
\begin{axis}[
    xlabel={Gaming complexity $L^*_G(\cE_\lambda)$},
    label style={font=\footnotesize}, tick label style={font=\scriptsize},
    ylabel={Learning complexity $L^*_L(\cE_\lambda; \eps)$},
    xmode=log, ymode=log,
    xmin=1, xmax=1000,
    ymin=10, ymax=5000,
    width=\textwidth, height=0.58\textwidth,
    grid=both, grid style={gray!20},
    legend pos=north east,
    legend style={font=\scriptsize},
    every axis plot/.append style={thick}
]
\addplot[blue, mark=*, mark size=2pt] coordinates {
    (1.43, 1974) (1.44, 1974) (1.45, 1316) (1.46, 987)
    (1.50, 395) (1.59, 197) (1.79, 99) (2.04, 66)
    (2.86, 39.5) (4.76, 28.2) (7.14, 24.7) (14.3, 21.9)
    (28.6, 20.8) (71.4, 20.1) (143, 19.9) (286, 19.8)
    (714, 19.8) (1000, 19.8)
};
\addlegendentry{Pareto frontier $\cE_\lambda$}
\addplot[red, mark=square*, only marks, mark size=4pt] coordinates {(1.43, 1974)};
\addlegendentry{$\cE_{\mathrm{coarse}}$ ($\lambda{=}1$)}
\addplot[green!60!black, mark=triangle*, only marks, mark size=4pt] coordinates {(1000, 19.7)};
\addlegendentry{$\cE_{\mathrm{loc}}^\delta$ ($\lambda{\to}0$)}
\end{axis}
\end{tikzpicture}
\end{minipage}
\caption{Pareto frontier of Theorem~\ref{thm:pareto} at $d=3$, $\bar\alpha=0.3$, $\eps=0.01$, evaluated at the worst-case disagreement coefficient $\dis_{\cB,\mu}(\eps) = 1/\eps$ with all constants set to one; each point is one value of $\lambda \in (0,1]$, both axes log-scale. The $\lambda \to 0$ limit has $L^*_G = \infty$ and is plotted at the axis edge.}
\label{fig:synthetic}
\end{figure}

\paragraph{Scope of empirical validation.} Protocol B is strictly more informative than the ternary $\cE_{\mathrm{loc}}^\delta$ of Theorem~\ref{thm:learner-value-loc}, since counterfactual access returns a continuous opposite-side point rather than a $\{$accept, reject, abstain$\}$ signal. Both belong to the boundary-localizing class of Definition~\ref{def:localizing}, and our experiments confirm that the class admits exponential extraction rates; isolating $\cE_{\mathrm{loc}}^\delta$ specifically would require a victim that responds to near-boundary queries with abstention in place of a counterfactual, which we leave as a refinement.

\section{Discussion}
\label{sec:disc}

The dichotomy between coarse and boundary-localizing abstention is, we think, the part of this paper most likely to survive contact with richer models: on hard distributions, no single static rule serves both adversaries. What we do not yet know falls into five groups.

\paragraph{Closing the rate for non-regular boundary classes.} Theorem~\ref{thm:learner-value-loc} requires Assumption~\ref{ass:regularity}. The assumption holds for linear classifiers, bounded-degree polynomial thresholds, and decision trees of bounded depth. It fails for RBF networks and unions of unboundedly many intervals. For such classes the binary-search algorithm no longer recovers the boundary in $O(\log(1/\eps))$ queries per coordinate, and the Fano-on-ternary-signals lower bound also breaks. We do not know the right rate to replace $\Theta(d \log(1/\eps))$; our guess is $\Theta(d \log^\gamma(1/\eps))$ or $\Theta(d/\eps^\gamma)$ for some $\gamma \in (0, 1)$ set by a smoothness parameter of $\cB$. On synthetic RBF boundaries, which may violate regularity, we observe the localizing rate degrading to roughly $d/\sqrt{\eps}$, in line with this guess at $\gamma = 1/2$.

\paragraph{Multi-class extensions.} Everything above is binary; with $K > 2$ classes each ingredient changes. The boundary becomes a union of up to $\binom{K}{2}$ pairwise surfaces; the relevant capacity is the Natarajan dimension~\cite{natarajan1989learning} or the DS dimension characterizing multi-class learnability~\cite{brukhim2022characterization}; active learning extends with subtler label-complexity behavior~\cite{krishnamurthy2017active}; and consistent multi-class abstention already differs from Chow's binary rule~\cite{ramaswamy2018consistent}. The two extremes are affected asymmetrically: the Fano counting improves only logarithmically (each response carries $\log_2(K{+}1)$ bits), while binary search degrades structurally: a path may cross several pairwise surfaces, Assumption~\ref{ass:regularity} must hold per class pair, and an abstain witnesses proximity to \emph{some} surface without naming which. Whether the gap survives, and which multi-class dimension governs both extremes, we do not know.

\paragraph{Non-linear principal objectives.} Our principal minimizes adversary value subject to an abstention budget, which keeps the principal's problem linear in $\cE$. A real deployer also weighs accuracy on legitimate users against adversary cost, and may face manual-review costs non-linear in the abstain rate. Under such objectives the optimum need not be either extreme. The standard concavification arguments of Bayesian persuasion~\cite{kamenica2011bayesian} do not directly apply, because the receiver's utility is inferential rather than action-payoff; the persuasion analog for inferential receivers we leave unresolved.

\paragraph{Dynamic experiments.} Theorem~\ref{thm:divergence} separates the two extremal \emph{static} rules by a factor polynomial in $1/\eps$, and the combinations $\cE_\lambda$ inherit the gap. A time-varying rule that adapts to the query history might collapse it; resetting the rule's state at fixed intervals is one concrete candidate. The persuasion analog is multi-period information design~\cite{ely2017beeps}, which sometimes circumvents single-period impossibility results. Whether a dynamic rule can achieve both at constant-factor cost would settle whether the dichotomy of Theorem~\ref{thm:dichotomy} is an artifact of static rules; we have evidence in neither direction.

\paragraph{Broader empirical scope.} Our experiments fix the victim to logistic regression; bounded-depth decision trees and low-degree polynomial thresholds also satisfy Assumption~\ref{ass:regularity} but need counterfactual constructions beyond the closed-form orthogonal projection we use. A comparison against decision-based attacks such as HopSkipJumpAttack~\cite{chen2020hopskipjump} would situate the rate against the strongest published label-only attack, not only ActiveThief~\cite{pal2020activethief}. Finally, our four language-feature experiments assume the attacker queries the same MiniLM-L6 encoder as the victim; an attacker with a different encoder is learning the victim's classifier composed with a transformation between the encoders, and neither the disagreement coefficient nor the binary-search localization obviously transfers.

\begin{credits}
\subsubsection{\discintname}
The author has no competing interests to declare that are relevant to the content of this article.
\end{credits}

\bibliographystyle{splncs04}
\bibliography{refs}

\end{document}